\newcommand{\myparagraph}[1]{\vspace{0.1em}\noindent\textbf{#1}}
\newcommand{\ie}{\textit{i}.\textit{e}.}
\newcommand{\eg}{\textit{e}.\textit{g}.}
\definecolor{ours}{gray}{.95}
\newcommand{\cmark}{\ding{51}}
\newcommand{\xmark}{\ding{55}}
\newtheorem{theorem}{Theorem}
\newtheorem{proposition}{Proposition}
\definecolor{lightgreen}{RGB}{144,238,144} 
\definecolor{lightblue}{RGB}{58,121,150} 
\definecolor{bloodred}{RGB}{192,0,0} 
\begin{document}
\title{Towards Customized Knowledge Distillation for Chip-Level Dense Image Predictions}
\author{Dong~Zhang, Pingcheng Dong, Long Chen, Kwang-Ting~Cheng,~\IEEEmembership{Fellow,~IEEE}
\thanks{D. Zhang, P. Dong, and K-T. Cheng are with the Department of Electronic and Computer Engineering, HKUST, Hong Kong, China. E-mail:~\{dongz,~timcheng\}@ust.hk,~pingcheng.dong@connect.ust.hk.}
\thanks{L. Chen is with the Department of Computer Science and Engineering, HKUST, Hong Kong, China. E-mail:~longchen@ust.hk.}
}
\markboth{Under Submission}%
{Shell \MakeLowercase{\textit{et al.}}: Bare Demo of IEEEtran.cls for IEEE Journals}
\maketitle
\begin{abstract}
It has been revealed that efficient dense image prediction (EDIP) models designed for AI chips, trained using the knowledge distillation (KD) framework, encounter two key challenges, including \emph{maintaining boundary region completeness} and \emph{ensuring target region connectivity}, despite their favorable real-time capacity to recognize the main object regions. In this work, we propose a customized boundary and context knowledge distillation (BCKD) method for EDIPs, which facilitates the targeted KD from large accurate teacher models to compact small student models. Specifically, the \emph{boundary distillation} focuses on extracting explicit object-level boundaries from the hierarchical feature maps to enhance the student model's mask quality in boundary regions. Meanwhile, the \emph{context distillation} leverages self-relations as a bridge to transfer implicit pixel-level contexts from the teacher model to the student model, ensuring strong connectivity in target regions. Our proposed method is specifically designed for the EDIP tasks and is characterized by its simplicity and efficiency. Theoretical analysis and extensive experimental results across semantic segmentation, object detection, and instance segmentation on five representative datasets demonstrate the effectiveness of BCKD, resulting in well-defined object boundaries and smooth connecting regions.
\end{abstract}
\begin{IEEEkeywords}
Model compression, Knowledge distillation, Dense image predictions, Boundary and context learning.
\end{IEEEkeywords}

\IEEEpeerreviewmaketitle
\section{Introduction}
\label{intro}
\IEEEPARstart{T}{he} dense image prediction (DIP) tasks, \eg, semantic segmentation~\cite{long2015fully}, object detection~\cite{girshick2015fast}, and instance segmentation~\cite{wang2021end}, are fundamental yet challenging research problems within both domains of computer vision and multimedia computing~\cite{zhang2020causal,ahn2019weakly}. 
The objective of these tasks is to assign a semantic label to each object and/or pixel of the given image~\cite{zhang2020causal}. In recent years, advancements in general-purpose GPU technology have resulted in notable enhancements in both size and accuracy of sophisticated DIP models~\cite{cao2022swin,strudel2021segmenter}, \eg, Mask2Former~\cite{cheng2022masked}, SegNeXt~\cite{guo2022segnext}, and SAM~\cite{kirillov2023segment}. However, deploying large and accurate DIP models on resource-constrained edge computing devices, \eg, artificial intelligence chips~\cite{dong202528nm}, presents significant challenges due to the substantial computational costs and high memory consumptions associated with these models~\cite{wang2021end}. 
\begin{figure*}[t]
\centering
\includegraphics[width=1\textwidth]{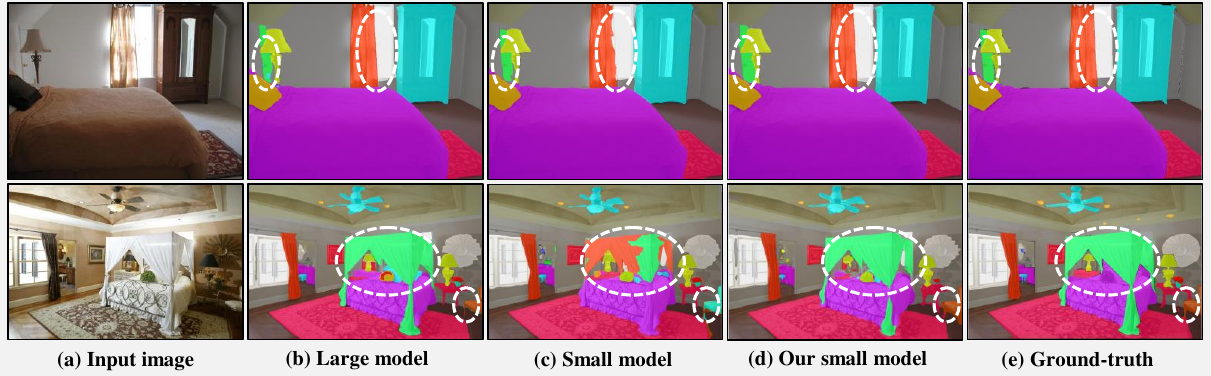}
\vspace{-6mm}
\caption{Two representative cases that small models are prone to produce errors. Result comparisons between large accurate models (b) and small efficient models (c) show that the latter tend to make errors in \textsl{maintaining boundary region completeness} (\eg, the \textit{\textcolor{orange}{``curtain''}} and the \textit{\textcolor{green}{``door''}}) and \textsl{preserving target region connectivity} (\eg, the \textit{\textcolor{lightgreen}{``bed valance''}} and the \textit{\textcolor{red}{``chair''}}). With the help of our BCKD in (d), small  models can address the two types of errors, leading to crisp region boundaries and smooth connecting regions. ``\textsl{w/}'' denotes with the corresponding implementation. Samples are from the ADE20K dataset~\cite{zhou2017scene}.}
\label{fig1}
\vspace{-3mm}
\end{figure*}

Compressing large DIP models into compact efficient DIP (EDIP) models offers an intuitive and cost-effective solution to address the severe resource limitations associated with mapping vision models onto edge computing devices~\cite{dong20214,zhang2021self}. In particular, the cross-architecture manner enables compressed models to seamlessly adapt to customized edge devices, eliminating the need for hardware modifications while maintaining computational efficiency. This manner significantly reduces deployment complexity and enhances the flexibility of model inference across heterogeneous edge computing platforms~\cite{yin20231}.
To achieve this goal and develop accuracy-preserving EDIP models, knowledge distillation (KD)~\cite{hinton2015distilling,wang2023efficient}, a prevalent model compression technology, has been pragmatically employed by using a small efficient model~(\ie, the student model) by imitating the behavior of a large accurate model~(\ie, the teacher model) in training~\cite{hinton2015distilling,zhao2022decoupled}. During inference, only the student model is utilized, allowing for a highly-efficient recognition pattern while simultaneously reducing the model size~\cite{dong20214,zhang2021self,cui2023kd}. 
Despite significant advancements by current KD methods across multiple dimensions, including sophisticated distillation strategies~\cite{cui2023kd} and complex distillation content~\cite{wang2020intra}, the inherent complexity of DIP continues to pose two critical challenges for existing approaches, particularly for the efficient compact models. The details are as follows: 

Primary KD methods mainly emphasize the imitation of general knowledge (\eg, features, regions, and logits) while overlooking the nuanced understanding of features along objective semantic boundaries and connecting internal regions essential for EDIPs~\cite{zhang2021self,wang2022active}. Particularly, since the small student model often predicts the main object regions fairly well but fails in boundary and connecting regions~\cite{fu2019dual,yuan2020object,cao2022swin}, the conventional utilization of task-agnostic general KD may not be effective enough and can be considered purposeless and redundant~\cite{gou2021knowledge,xu2020feature,zhao2022decoupled}, remaining a performance gap between the obtained results and the expected ones~\cite{wang2024crosskd}. 
For instance, we recommend the representative semantic segmentation task as an example. As shown in Figure~\ref{fig1}, the small student model (\ie, PSPNet-18~\cite{zhao2017pyramid}) in (c) produces inferior results compared to the large teacher model (\ie, PSPNet-101~\cite{zhao2017pyramid}) results in (b). The student model wrongly segments the boundary regions of \textit{\textcolor{orange}{``curtain''}} and \textit{\textcolor{green}{``door''}} as the background category or other foreground objects, and produces fragmented \textit{\textcolor{lightgreen}{``bed valance''}} and \textit{\textcolor{red}{``chair''}}, breaking the regional {relation connectivity}. Generally, the common errors observed in the outputs of the small student model can be summarized as \emph{maintaining boundary region completeness} and \emph{ensuring target region connectivity}.

To mitigate these errors and narrow the performance gap, in this paper, we propose a customized and targeted KD strategy termed as \textbf{B}oundary and \textbf{C}ontext \textbf{K}nowledge \textbf{D}istillation (BCKD). By ``customized'', we mean that our method's inherent ability to synergistically address the common errors present in existing EDIP models, while also coexisting with other methods (\emph{ref.}~Sec.~\ref{ablation}). BCKD mainly consists of two key components: the \emph{boundary distillation} and the \emph{context distillation}, aimed at rectifying the typical common errors encountered by EDIP models in \emph{maintaining boundary region completeness} and \emph{ensuring target region connectivity}, respectively. Specifically, \emph{boundary distillation} involves generating explicit object-level boundaries from the hierarchical backbone features, enhancing the completeness of the student model's masks in boundary regions (\emph{ref.}~Sec.~\ref{sec:4:2}). At the same time, \emph{context distillation} transfers implicit pixel-level contexts from the teacher model to the student model through self-relations, ensuring robust connectivity in the student's masks (\emph{ref.}~Sec.~\ref{sec:4:3}). BCKD is tailored specifically for EDIP tasks and offers a more targeted distillation pattern and a more tailored distillation manner compared to conventional task-agnostic KD methods. From a rigorous theoretical perspective, we establish and prove the effectiveness of our BCKD method (\emph{ref.}~Sec.~\ref{sec:4:5}). To validate the superior accuracy, we conducted extensive experiments in three representative dense image prediction tasks, including semantic segmentation, object detection, and instance segmentation, utilizing five challenging datasets such as Pascal VOC 2012~\cite{everingham2010pascal}, Cityscapes~\cite{cordts2016cityscapes}, ADE20K~\cite{zhou2017scene}, COCO-Stuff 10K~\cite{caesar2018coco}, and MS-COCO 2017~\cite{lin2014microsoft}. Qualitatively, BCKD produces sharp region boundaries and smooth connecting regions, addressing challenges that have hindered EDIP models. Quantitatively, BCKD consistently improves the accuracy of baseline models in various metrics, achieving competitive performance.

The main contributions are summarized as the following three folds: \emph{\textbf{1})}~We revealed two prevalent issues in existing EDIP models: maintaining boundary region completeness and ensuring target region connectivity. \emph{\textbf{2})}~We proposed a customized and targeted boundary and context knowledge distillation method. Our method not only demonstrates inherent coherence, but also possesses the capability to coexist with other methods. 
\emph{\textbf{3})}~Theoretical analysis demonstrates the superior effectiveness of our BCKD.
\emph{\textbf{4})}~Experimental evaluations across various tasks, baselines and datasets illustrate the superior accuracy of our method in comparison with existing methods. 

\section{Related Work}
\subsection{Dense Image Prediction (DIP) Tasks} 
DIP is a fundamental research problem within the fields of computer vision and multimedia computing, with the objective of assigning each object and/or pixel in an input image to a predefined category label, thereby enabling comprehensive semantic image recognition~\cite{long2015fully,zhou2024boundary,zhang2020causal,lin2023click}. 
Current mainstream DIP models can be roughly classified into the following three categories based on their backbone components: 1)~methods based on CNNs~\cite{long2015fully,yu2018bisenet,noh2015learning,huang2019ccnet}, 2)~methods based on ViT\footnote{We consider the visual state space model-based methods as a specialized Transformer architecture~\cite{gu2023mamba,xu2024survey}, owing to its structural similarities with the ViT model. Besides, we do not address the content related to these models. Therefore, we will no longer have separate discussions on this aspect.}~\cite{strudel2021segmenter,wang2022uformer,zheng2021rethinking}, and 3)~methods that combine CNNs and ViT~\cite{li2022next,peng2021conformer}. The key difference between these types of architectures is the approach used for feature extraction and how the extracted features are utilized in enhancing the capacity of CNNs models to capture contextual features~\cite{cao2022swin,zhang2025generalized}, increasing the capacity of ViT models to capture local features~\cite{wu2021cvt,zhang2022graph,peng2021conformer}, and leveraging low-level features to improve representation capacity~\cite{zhang2023cae,xie2021segformer} for achieving favorable results. 

Concretely, due to differences in feature extraction manners between CNNs and ViT, these two categories exhibit slight performance differences~\cite{wang2022uformer,peng2021conformer,zheng2021rethinking,li2022next}. For example, CNNs methods are better at predicting local object regions, while ViT methods, due to their stronger contextual information, can produce more complete object masks. Fortunately, the mixture of CNNs and ViT (\eg, CMT~\cite{guo2022cmt}, CvT~\cite{wu2021cvt}, ConFormer~\cite{peng2021conformer}, CAE-GreaT~\cite{zhang2023cae}, and visual Mamba~\cite{gu2023mamba}) uses the representation strengths of both patterns, resulting in highly satisfactory recognition performance~\cite{han2022survey,mao2022towards,wang2021pyramid}. 
In addition to these fundamental categories, there are advanced approaches that also utilize task-specific training tricks (\eg, graph reasoning~\cite{zhang2022graph}, linear attention~\cite{kong2022spvit}, mult-scale representation~\cite{fan2021multiscale}) to improve the accuracy. However, while current methods have achieved promising accuracy, mapping these models on resource constrained edge computing devices remains challenging because these devices typically have limited computation resources and memory consumptions~\cite{dong20231920,dong202528nm}. In this work, we do not intend to modify the network architecture. We first investigate the result disparities between small and large models and then propose a novel KD strategy tailored to the EDIP models. We aim at improving the recognition accuracy of small models without requiring any extra training data or increasing the inference costs.

\subsection{Knowledge Distillation (KD) in DIPs} 
KD is a well-established model compression technology that aims to transfer valuable knowledge from a large accuracy teacher model to a small efficient student model, with the objective of enhancing the student's accuracy during inference~\cite{chen2023kbstyle,gou2023multi,wang2020intra}. 
It is worth mentioning that the effectiveness of KD in cross-architecture scenarios has enabled significant flexibility in artificial intelligence chip design, as it eliminates the need to modify the underlying operators while maintaining model performance, which provides a practical solution for hardware adaptation without compromising computational efficiency~\cite{dong202528nm}. The key factors for the success of KD in DIPs are: {1)~the types of knowledge being distilled}, \eg, general knowledge: features and logits, and task-specific knowledge: class edge for semantic segmentation and object localization for object detection, {2)~the distillation strategies employed}, \eg, offline distillation~\cite{tseng2022offline}, online distillation~\cite{guo2020online}, and self-distillation~\cite{zhang2019your}, and {3)~the architecture of the teacher-student pair}, \eg, multi-teacher KD~\cite{yuan2021reinforced}, attention-based KD~\cite{passban2021alp}, and graph-based KD~\cite{lee2019graph}.

While the effectiveness of existing KD methods has been validated in general vision tasks, current methods mainly rely on coarse task-agnostic knowledge and do not consider the task-specific feature requirements~\cite{sun2020distilling,gou2021knowledge,mao2022towards,chen2017learning,wang2019distilling}. Especially for EDIPs, models are highly sensitive to feature representations~\cite{long2015fully,zhang2022unabridged,zheng2021rethinking,zhang2018context}. Therefore, general KD may not be effective enough and can be considered purposeless and redundant~\cite{gou2021knowledge,xu2020feature,zhao2022decoupled,zhang2022graph}, remaining a performance gap between the obtained results and the expected ones~\cite{yang2022cross,cui2023kd,wang2024crosskd}. Recent studies have shown that task-specific patterns of KD can help further improve the performance of student models~\cite{zheng2022LD}. For example, in object detection, object localization KD leads to more accurate predictions than the general knowledge~\cite{sun2020distilling,zhixing2021distilling}. In this work, we also adopt the idea of task-specific KD. Our contribution lies in proposing a customized KD scheme, namely {boundary distillation} and {context distillation}, which target the common errors of EDIP models, namely their tendency to make errors in \emph{maintaining boundary region completeness} and \emph{ensuring target region connectivity}. It is also worth noting that while some advanced methods, \eg, CTO~\cite{lin2023rethinking}, SlimSeg~\cite{xue2022slimseg}, and BPKD~\cite{liu2024bpkd}, have integrated boundary information in EDIPs, \emph{they necessitate the pre-extraction and incorporation of ground-truth boundary information}~\cite{xue2022slimseg,liu2024bpkd,lin2023rethinking}. In contrast, our method obviates the demand for pre-extracting object boundaries, thereby making it more practical for real-world applications and enabling savings in time and labor.
\section{Preliminaries}
In the training phase, KD intends to facilitate expectant knowledge transfer from a large teacher model $\mathbb{T}$ to a small compact student model $\mathbb{S}$, with the primary goal of enhancing the accuracy of $\mathbb{S}$ ~\cite{hinton2015distilling,zhang2021self,zhang2019your,gou2021knowledge,ji2021refine}. In inference, only $\mathbb{S}$ is used, so there are no computational overheads. 
Typically, features and logits serve as a medium for knowledge transfer. Besides, the temperature scaling strategy is often utilized to smooth the features and logits, which helps to lower prediction confidence and alleviate the issue of excessive self-assurance in $\mathbb{T}$~\cite{phuong2019distillation}. Formally, KD can be expressed by minimizing the cross-entropy loss as follows:
\begin{equation}
\mathcal{L}_{KD} = - \tau^2 \sum_{i \in M} {\sigma(\textbf{T}_i)^{1/\tau}\textrm{log}\left(\sigma(\textbf{S}_i)^{1/\tau} \right)}, 
\label{eq1}
\end{equation}
where $\textbf{T}_{i}$ and $\textbf{S}_{i}$ (both have been adjusted to the same dimension via $1 \times 1$ convolutions) 
are the $i$-th feature/logit item extracted from $\mathbb{T}$ and $\mathbb{S}$, respectively. $\sigma(\cdot)$ is the softmax normalization operation along the channel dimension, and $\tau \in \mathbb{R}_+$ denotes the temperature scaling coefficient. $M$ denotes the learning objective of $\textbf{T}_{i}$ and $\textbf{S}_{i}$, which typically refers to the spatial dimensions. Following~\cite{zhang2019your,phuong2019distillation}, to simplify temperature scaling effectively, we use $\textbf{T}_{i}$/$\textbf{S}_{i}$ divided by $\tau$ to achieve the similar effect. In addition to cross-entropy loss, other loss functions are also commonly used for KD, including KL divergence loss and MSE loss~\cite{liu2019structured,liu2022transkd,zheng2023distilling}. While existing KD methods have demonstrated promising results across various vision tasks~\cite{zhang2021self,cui2023kd,xu2020feature,liu2019structured,wang2020intra,gou2021knowledge}, they have not adequately addressed the specific feature understanding required for object boundaries and connecting regions in EDIP tasks. This oversight has led to suboptimal performance in these contexts. In following, we will introduce a complementary and targeted KD scheme informed by the common failure cases observed in small models, as shown in Figure~\ref{fig1}, with the aim of enhancing their inference accuracy.
\begin{figure*}[t]
\centering
\includegraphics[width=1\textwidth]{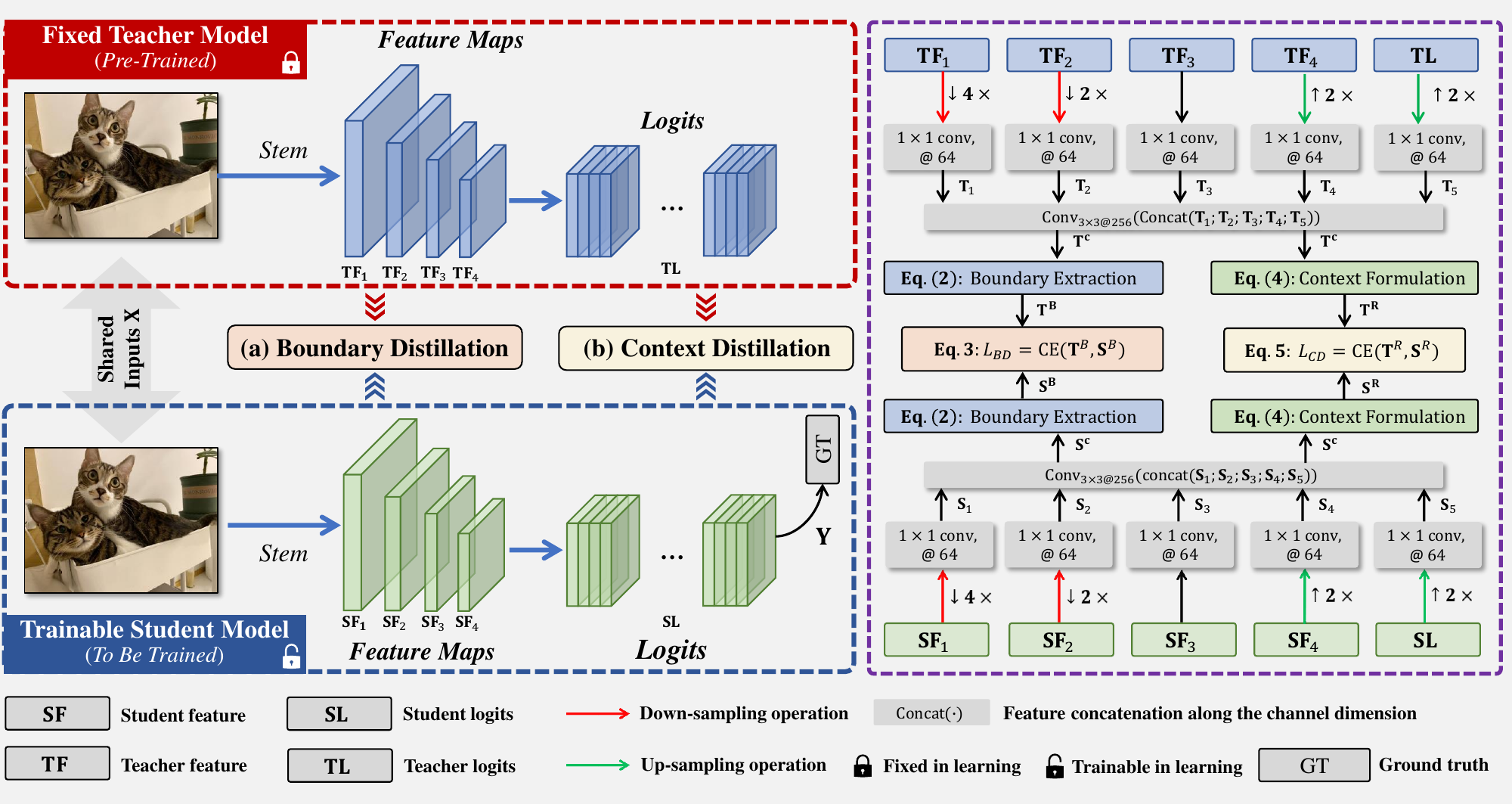}
\vspace{-4mm}
\caption{The overall network architecture of our proposed boundary and context distillation strategy for the efficient dense image prediction tasks, where the right side illustrates the implementation details of the whole network. Specifically, the {boundary distillation} involves generating explicit object-level boundaries from the hierarchical backbone features, enhancing the completeness of the student model's masks in boundary regions (\emph{ref.}~Sec.~\ref{sec:4:2}). At the same time, the {context distillation} transfers implicit pixel-level contexts from the teacher model to the student model through self-relations, ensuring robust connectivity in the student's masks (\emph{ref.}~Sec.~\ref{sec:4:3}). Compared to existing methods, our method demonstrates a stronger specificity for the EDIP tasks and inherently possesses the ability to synergistically address the common errors found in small models.}
\vspace{-2mm}
\label{fig2}
\end{figure*}

\section{Our Method}
\subsection{Overview}
\label{sec:4:1}
Figure~\ref{fig2} illustrates an overview of the network architecture for our proposed BCKD. The whole network mainly consists of an accurate teacher network $\mathbb{T}$, which is a large network that has been trained, and a small efficient network $\mathbb{S}$ that is waiting to be trained. 
The input for $\mathbb{T}$ and $\mathbb{S}$ is an arbitrary RGB image $\textbf{X}$, and the output of $\mathbb{S}$ is a semantic mask or bounding box $\textbf{Y}$ that predicts each pixel or/and object with a specific class label. 
The hierarchical features extracted from the backbone network are concatenated along the channel dimension to facilitate the extraction of EDIP-specific boundary and contextual information from $\textbf{X}$. The concatenated features with $256$ channel dimension are defined as $\textbf{T}^{c} = \textrm{Conv}_{3\times3}(\textrm{concat}(\textbf{T}_1;\textbf{T}_2;\cdot \cdot \cdot;\textbf{T}_5))$ and $\textbf{S}^{c} = \textrm{Conv}_{3\times3}(\textrm{concat}(\textbf{S}_1;\textbf{S}_2;\cdot \cdot \cdot;\textbf{S}_5))$ for $\mathbb{T}$ and $\mathbb{S}$, respectively. It should be noted that both $\textbf{T}_{i}$ and $\textbf{S}_{i}$ that are concatenated have been uniformly resized into $1/8$ of $\textbf{X}$'s spatial size via $1 \times 1$ convolution and up-/down-sampling operations. 
In training, we propose targeted \emph{boundary distillation} and \emph{context distillation} strategies that are tailored for the EDIP tasks.
Specifically, the \emph{boundary distillation} synthesizes explicit object-level boundaries $\textbf{T}^{B}$ and $\textbf{S}^{B}$ from $\textbf{T}^{c}$ and $\textbf{S}^{c}$, respectively, thereby the completeness of $\mathbb{S}$'s results in the boundary regions can be enhanced. At the same time, the \emph{context distillation} transfers implicit pixel-level relations $\textbf{T}^{R}$ and $\textbf{S}^{R}$ by using self-relations, ensuring that $\mathbb{S}$'s results have strong target region connectivity. 
\subsection{Boundary Distillation}
\label{sec:4:2}
The semantic object boundary is defined as a set of arbitrary pixel pairs from the given image, where the boundary between pairwise pixels has a value of $1$ if they belong to different classes, while if the pairwise pixels belong to the same class, the boundary between them has a value of 0~\cite{ahn2018learning}. Moreover, this attribute also exists in the hierarchical features/logits extracted by the backbone feature maps~\cite{chen2020weakly}. In our work, we use the semantic affinity similarity between arbitrary pairwise pixels from $\textbf{T}^{c}$ or $\textbf{S}^{c}$ to obtain the explicit object-level boundaries~\cite{ahn2019weakly,ru2022learning}. Concretely, for a pair of image pixels $\mathbf{T}^c_i$ and $\mathbf{T}^c_j$, $\textbf{T}_{i,j}^{B}$ can be formulated as:
\begin{equation}
\small
\begin{aligned}
\textbf{T}{i,j}^{B} = 1 - \max{p,q \in \Pi_{i,j}}{\mathcal{B} \left(\textrm{Conv}_{1 \times 1}(\mathbf{T}^c{p}), \textrm{Conv}_{1 \times 1}(\mathbf{T}^c{q})\right)},
\end{aligned}
\label{eq2}
\end{equation}
where $\mathbf{T}^c_p$ and $\mathbf{T}^c_q$ are two arbitrary pixel items from $\mathbf{T}^c$, and $\Pi_{i,j}$ denotes a set of pixel items on the line between $\mathbf{T}^c_i$ and $\mathbf{T}^c_j$. $\textrm{Conv}_{1 \times 1}$ denotes a $1 \times 1$ convolution layer that is used to compress the channel dimension, where the input channel size is $256$ and output channel size is $1$. $\mathcal{B}(\cdot)$ denotes the operation that determines the object-level image boundary values, which outputs either $1$ or $0$. $\textbf{T}^{B}$ can be obtained across the entire spatial domain, and $\textbf{S}^{B}$ can be obtained analogously. Based on $\textbf{T}^{B}$ and $\textbf{S}^{B}$, {boundary distillation} loss is formulated as:
\begin{equation}
\mathcal{L}_{BD} = - \tau^2 \sum_{i \in M} {\rho(\textbf{T}^{B}_i)^{1/\tau}\textrm{log}\left(\rho(\textbf{S}^{B}_i)^{1/\tau} \right)},
\label{eq3}
\end{equation}
where $\rho$ denotes the spatial-wise softmax normalization. By Eq.~\eqref{eq3}, $\mathbb{T}$'s accurate prediction of the object boundary region can be transferred into $\mathbb{S}$, thereby addressing its issue of maintaining boundary region completeness.

The proposed boundary distillation strategy can effectively address the limitations of EDIP models in achieving completeness in boundary region predictions. \emph{It is important to highlight that while several advanced methods, such as BGLSSeg~\cite{zhou2024boundary}, SlimSeg~\cite{xue2022slimseg}, and~BPKD~\cite{liu2024bpkd}, leverage explicit edge information for semantic segmentation, these methods necessitate the pre-extraction and integration of ground-truth masks (please refer to Table~\ref{rtab3} for further details). More importantly, the extracted edge information often lacks the semantic context of the objects and may introduce noise~\cite{yang2022focal,ji2022structural}.} In contrast, our method eliminates the need for pre-extracting image edges. By utilizing hierarchical feature maps, our method enhances the delineation of object boundaries with more comprehensive semantic information while mitigating the adverse effects of noise present in shallow features. 
As illustrated in Figure~\ref{boundary}, we show a comparison of the extracted image boundaries between our method and the ground truth edge method used in the state-of-the-art BPKD~\cite{liu2024bpkd} model. We can observe that the semantic boundaries extracted by our method can better cover the actual boundaries of the semantic objects without introducing background noise information. This innovation not only can enhances the practicality of our method for real-world applications but also streamlines the overall process, leading to reductions in both time and labor requirements.
\begin{figure}[t]
\centering
\includegraphics[width=.48\textwidth]{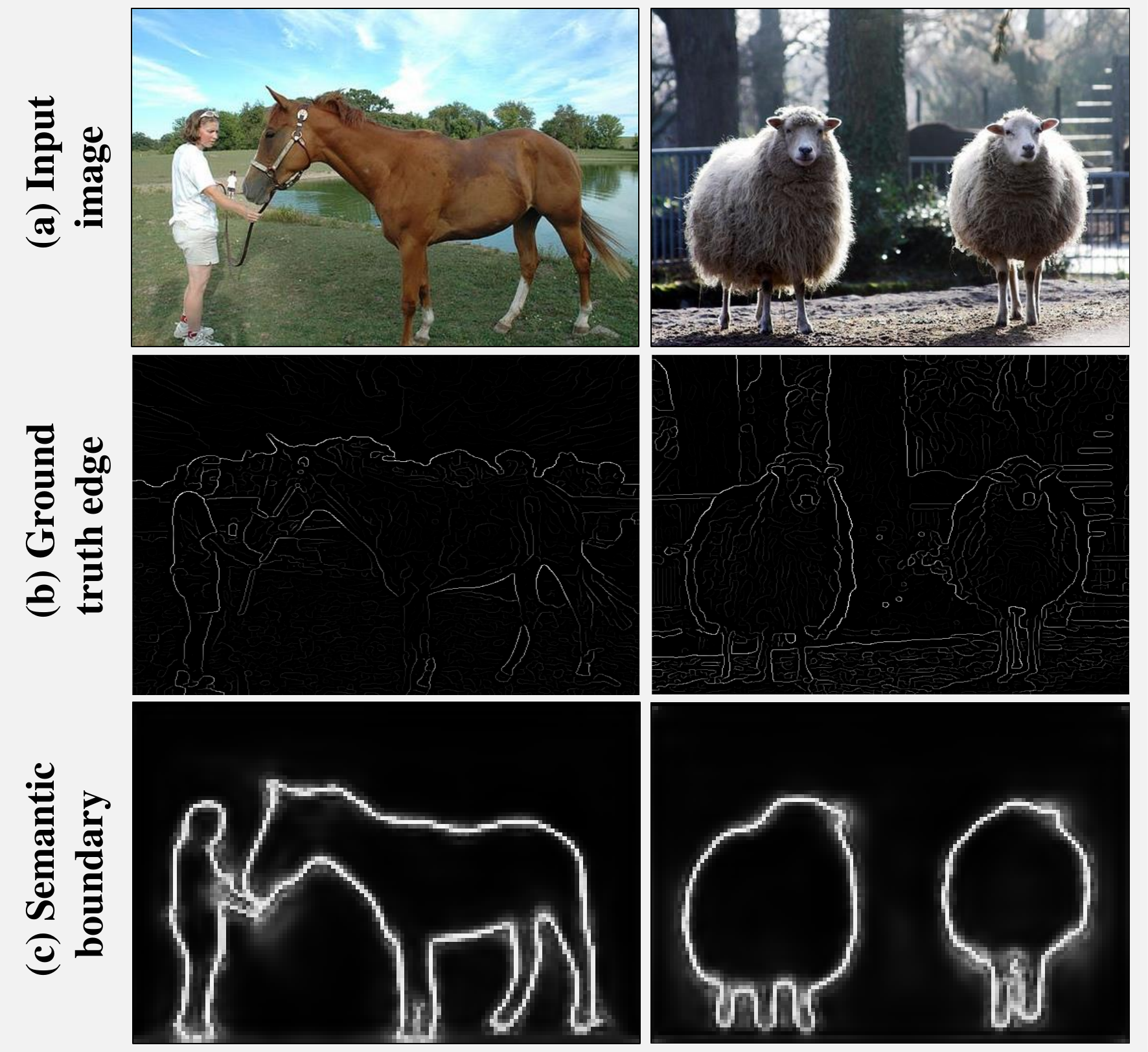}
\vspace{-2mm}
\caption{The visualization comparisons of the extracted image boundaries between our method in (c) and the ground truth edge method in (b) used in the state-of-the-art BPKD~\cite{liu2024bpkd}. Images are from Pascal VOC 2012~\cite{everingham2010pascal}.}
\vspace{-2mm}
\label{boundary}
\end{figure}

\subsection{Context Distillation}
\label{sec:4:3}
Elaborate object relations, as validated in~\cite{caron2021emerging,wang2021dense,li2022exploring}, are beneficial for learning implicit contextual information across spatial dimensions~\cite{lin2023structtoken,ji2022structural}. In this paper, we also adopt this scheme to address the challenge of inadequate preservation of target region connectivity in EDIPs. We consider this scheme as the medium in the KD process. Our contribution lies in treating pixel-level relations as a bridge for context transfer. Compared to existing methods~\cite{li2022exploring,ji2022structural}, our approach utilizes pixel-level relations solely during the training process, thereby avoiding the increase in model complexity and parameters in inference that is typically associated with current methods~\cite{liu2019structured,lin2023structtoken}. 
Besides, compared to object-level relations, the employed pixel-level relations can capture global contextual information more comprehensively, making them more suitable for DIP tasks. Specifically, for the concatenated features $\textbf{T}^{c}$ of $\mathbb{T}$, its self-relation $\textbf{T}^{R}$ is formulated as: 
\begin{equation}
\textbf{T}^{R} = \sigma\left( \frac{O(\textbf{T}^{c})^T \cdot O(\textbf{T}^{c})}{\sqrt{d}}\right)/\tau \in \mathbb{R}^{hw\times hw},
\label{eq4}
\end{equation}
where $O(\cdot)$ denotes the feature-aligned operation as in~\cite{li2022exploring,xie2023nonrigid}, which aims to align the feature distribution of the $\mathbb{S}$ with that of the $\mathbb{T}$ as closely as possible. $T$ denotes the matrix transpose operation. $d$ is the channel size of $\textbf{T}^{c}$, which is $256$. $h$ and $w$ denotes the height and width of $\textbf{T}^{c}$, respectively. $\textbf{S}^{R}$ of $\mathbb{S}$ can also be obtained analogously. Therefore, the \textsl{context distillation} can be expressed as:
\begin{equation}
\mathcal{L}_{CD} = - \tau^2 \sum_{k \in hw\times hw} {\sigma(\textbf{T}^{R}_k)^{1/\tau}\textrm{log}\left(\sigma(\textbf{S}^{R}_k)^{1/\tau} \right)},
\label{eq5}
\end{equation}
where $k = (1,2,...,hw\times hw)$ is the index item. $\textbf{T}^{R}_k$ and $\textbf{S}^{R}_k$ denotes the $k$-th item in $\textbf{T}^{R}$ and $\textbf{S}^{R}$, respectively.
\begin{figure}[t]
\centering
\includegraphics[width=.47\textwidth]{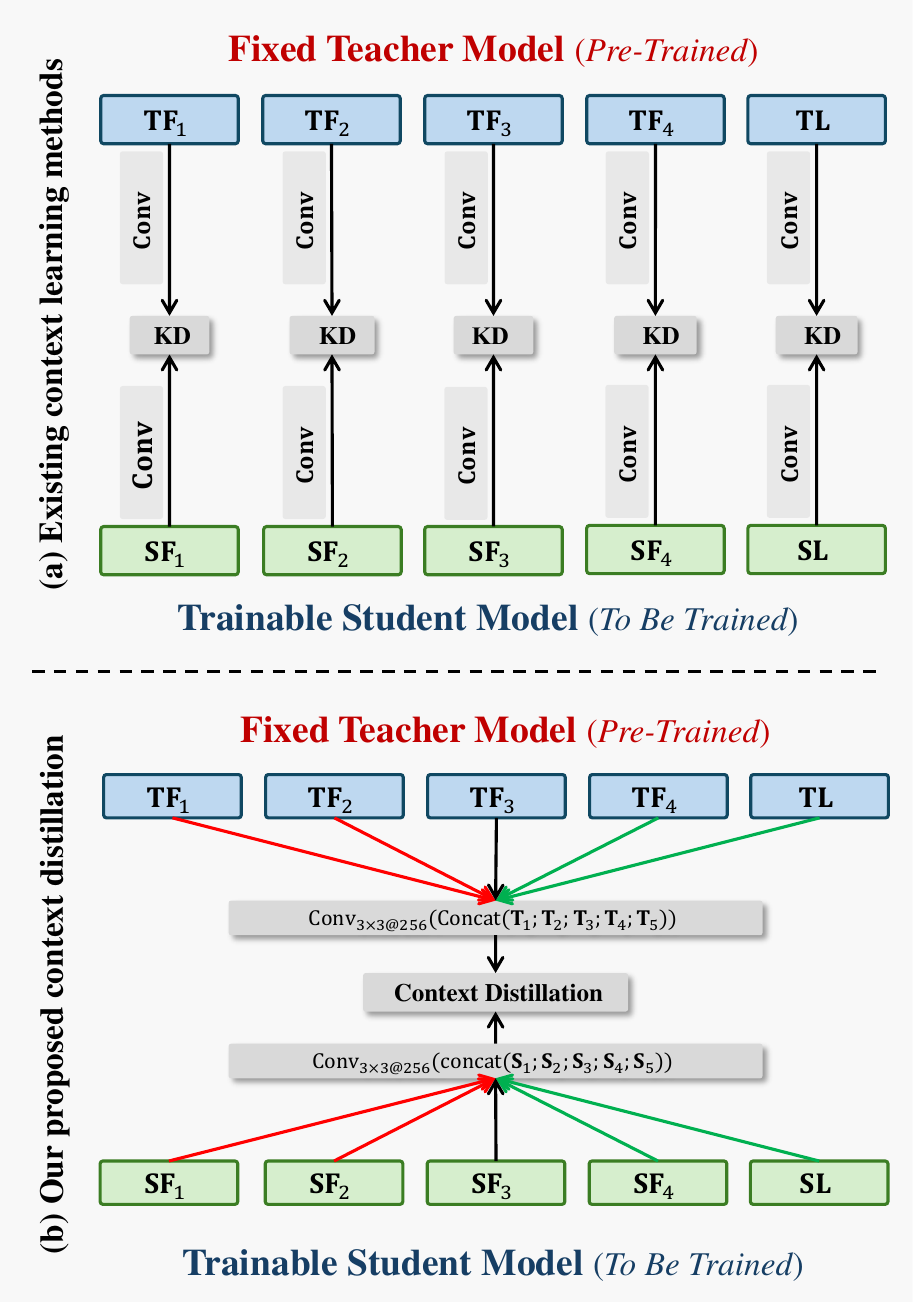}
\vspace{-3mm}
\caption{Architecture comparisons between our proposed context distillation (b) and existing context learning methods (a). Our method leverages concatenated features in a whole-to-whole manner, which avoids the potential noise introduced by layer-to-layer distillation modes.}
\vspace{-3mm}
\label{cdmanner}
\end{figure}

The proposed context distillation method presents an efficient solution that does not incur additional inference overhead. \emph{As illustrated in Figure~\ref{cdmanner}, our method in (b), which utilizes concatenated features in a whole-to-whole manner, circumvents the potential noise associated with existing context learning methods in (a) for semantic segmentation~\cite{liu2019structured,liu2024bpkd} that rely on layer-to-layer distillation.} Our method is specifically tailored to address the potential challenge of incomplete preservation of target region connectivity as shown in Figure~\ref{fig1}, rather than solely focusing on the enhancement of feature representations in a generic context.

\subsection{Overall Loss Function}
\label{sec:4:4}
With the {boundary distillation} loss $\mathcal{L}_{BD}$ and the {context distillation} loss $\mathcal{L}_{CD}$, the total loss can be expressed as:
\begin{equation}
\mathcal{L} = \mathcal{L}_{SS} + \alpha \mathcal{L}_{BD} + \beta \mathcal{L}_{CD},
\label{eq6}
\end{equation}
where $\alpha$ and $\beta$ are two weights used to balance different losses. Empirically, these weights have a significant impact on model performance, and unreasonable weight settings may even lead to model collapse in training. To this end, inspired by previous work~\cite{sun2020distilling,yang2023effective}, to enhance the dependence of $\mathbb{S}$ on ground truth labels, we incorporate a weight-decay strategy that promotes a greater focus on $\mathcal{L}_{SS}$ as the training epoch increases. To this end, we initialize a time function as follows:
\begin{equation}
r(t) = 1 - (t-1)/t_{\textrm{max}},
\label{eq7}
\end{equation}
where $t = (1,2,..., t_{\textrm{max}})$ denotes the current training epoch and $t_{\textrm{max}}$ is the maximum training epoch. In training, we control the dependence of $\mathcal{L}$ on $\mathcal{L}_{BD}$ and $\mathcal{L}_{CD}$ by using $r(t)$. Therefore, the total loss in Eq.~\eqref{eq6} can be formulated as:
\begin{equation}
\mathcal{L} = \mathcal{L}_{SS} + r(t) \alpha \mathcal{L}_{BD} + r(t) \beta \mathcal{L}_{CD}.
\label{eq8}
\end{equation}
\subsection{Theoretical Analysis}
\label{sec:4:5}
Our BCKD is theoretically grounded in differential geometry and spectral graph theory. In the following, we will analyze its components through measurable properties of the learned feature manifolds $\mathcal{M}_T$ (teacher) and $\mathcal{M}_S$ (student), with proofs connecting to the empirical results in Section~\ref{ablation}.

\myparagraph{Boundary-Aware Manifold Alignment.}
The proposed $\mathcal{L}_{BD}$ in Eq.~\eqref{eq3} induces geometric consistency between teacher and student decision boundaries. 

\begin{theorem}[Boundary Consistency]
Under $\mathcal{L}_{BD}$ minimization with $\tau > 1$, for any boundary point $x_b$, we have:
\begin{equation}
\|\mathbf{J}_T(x_b) - \mathbf{J}_S(x_b)\|_F \leq \sqrt{2\mathcal{L}_{BD}/\tau^2} + \mathcal{O}(e^{-\tau}),
\end{equation}
where $\mathbf{J}_\cdot$ are Jacobian matrices of the feature maps.
\end{theorem}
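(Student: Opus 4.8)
The plan is to propagate the closeness of the teacher and student boundary distributions — which is precisely what $\mathcal{L}_{BD}$ controls — down to the closeness of their feature Jacobians at a boundary point, via an information-theoretic inequality followed by a local first-order geometric argument. I would first read $\mathcal{L}_{BD}/\tau^2$ as the cross-entropy $H(p_T, p_S)$ between the temperature-scaled boundary distributions $p_T = \rho(\mathbf{T}^{B})^{1/\tau}$ and $p_S = \rho(\mathbf{S}^{B})^{1/\tau}$, and decompose it as $H(p_T) + D_{\mathrm{KL}}(p_T \,\|\, p_S)$. Since the teacher entropy satisfies $H(p_T) \geq 0$, this immediately yields $D_{\mathrm{KL}}(p_T \,\|\, p_S) \leq \mathcal{L}_{BD}/\tau^2$.

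Next I would invoke Pinsker's inequality, $\tfrac{1}{2}\|p_T - p_S\|_1^2 \leq D_{\mathrm{KL}}(p_T \,\|\, p_S)$, to obtain $\|p_T - p_S\|_1 \leq \sqrt{2\mathcal{L}_{BD}/\tau^2}$, which already produces the leading term of the stated bound. I would then pull this distributional estimate back through the spatial softmax $\rho$: because $\rho$ is Lipschitz and locally invertible on its image away from the simplex vertices, a bound on $\|p_T - p_S\|_1$ translates — up to a constant absorbed into the geometry — into a bound on the raw boundary-map discrepancy $\|\mathbf{T}^{B}(x_b) - \mathbf{S}^{B}(x_b)\|$ at the point of interest.

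The geometric heart of the argument is to relate this boundary-map discrepancy to $\|\mathbf{J}_T(x_b) - \mathbf{J}_S(x_b)\|_F$. Here I would linearize the affinity $\mathcal{B}$ appearing in Eq.~\eqref{eq2}: for pixels $p,q$ on a short segment $\Pi_{i,j}$ through $x_b$, a first-order Taylor expansion of the feature map gives $\mathbf{T}^c_p - \mathbf{T}^c_q \approx \mathbf{J}_T(x_b)(p-q)$, so the value $1 - \max_{p,q} \mathcal{B}$ is, to leading order, governed by the directional derivatives encoded in $\mathbf{J}_T(x_b)$. Under a non-degeneracy (bounded-inverse) assumption on the boundary map's dependence on the Jacobian at $x_b$, agreement of boundary values forces agreement of Jacobians to the same order, chaining the previous estimate into $\|\mathbf{J}_T(x_b) - \mathbf{J}_S(x_b)\|_F \leq \sqrt{2\mathcal{L}_{BD}/\tau^2} + (\text{correction})$. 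The residual $\mathcal{O}(e^{-\tau})$ I would attribute to the saturation tail of the temperature-scaled softmax: since $\mathcal{B}(\cdot) \in \{0,1\}$ is a hard indicator, after scaling by $1/\tau$ only an exponentially small probability mass survives on non-boundary configurations, and a standard tail estimate bounds its contribution by $e^{-\tau}$.

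The step I expect to be the main obstacle is exactly this last geometric linearization: making rigorous the claim that the discrete, $\max$-over-a-line boundary construction of Eq.~\eqref{eq2} is a well-conditioned function of the pointwise Jacobian. This needs (i) a smoothness/regularity hypothesis on $\mathbf{T}^c$ and $\mathbf{S}^c$ so that the Jacobian is a faithful first-order model along $\Pi_{i,j}$, and (ii) a quantitative non-degeneracy condition guaranteeing that the Jacobian-to-boundary map has a bounded inverse in the relevant directions — without which two distinct Jacobians could generate identical boundary maps and the estimate would collapse. Controlling the constants hidden in the softmax inversion and the Taylor remainder simultaneously, so that they are genuinely absorbed into the stated leading term rather than degrading its $\tau$-dependence, is the delicate part of the argument.
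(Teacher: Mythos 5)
Your route is genuinely different from the paper's, although both lean on Pinsker's inequality and both leave the same final step unjustified. You work directly with the distributions: read $\mathcal{L}_{BD}/\tau^2$ as a cross-entropy, drop the nonnegative teacher entropy to get a KL bound, and apply Pinsker to obtain $\|p_T-p_S\|_1 \le \sqrt{2\mathcal{L}_{BD}/\tau^2}$, which cleanly produces the leading constant. The paper instead argues through gradient geometry: it writes $\|\nabla L_T-\nabla L_S\|^2\le 2(1-\cos\theta)$ for the angle $\theta$ between teacher and student gradients, Taylor-expands to get $\cos\theta\ge 1-\tfrac12\mathcal{L}_{BD}\tau^{-2}+\mathcal{O}(\tau^{-4})$, and then asserts that ``the Jacobian bound follows from Pinsker's inequality applied to the manifold tangent spaces.'' Your decomposition is arguably the more standard and more transparent way to extract the $\sqrt{2\mathcal{L}_{BD}/\tau^2}$ term (the paper's angle bound, taken literally, presupposes unit-norm gradients and yields $\sqrt{\mathcal{L}_{BD}/\tau^2}$ without the factor of $2$). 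One caveat on your first step: $\rho(\mathbf{T}^B)^{1/\tau}$ is not a normalized distribution after exponentiation, so identifying $\mathcal{L}_{BD}/\tau^2$ with $H(p_T,p_S)$ requires a renormalization that introduces partition-function terms; this is a real wrinkle, though the paper's own manipulation is no more careful.

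The step you flag as the main obstacle --- passing from closeness of the boundary maps (or of the gradients) to closeness of the Jacobians in Frobenius norm, which requires a quantitative non-degeneracy of the $\max$-over-$\Pi_{i,j}$ boundary construction as a function of the pointwise Jacobian --- is precisely the step the paper does not supply either; its one-line appeal to ``Pinsker on tangent spaces'' is not a derivation. Likewise, neither your softmax-saturation heuristic nor anything in the paper rigorously establishes the $\mathcal{O}(e^{-\tau})$ remainder, though your account of its origin is at least plausible. In short: different decomposition, comparable (and honestly acknowledged) gaps, and your version is the more self-aware of the two.
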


\begin{proof}
The temperature-scaled gradients satisfy:
\begin{align}
\nabla\mathcal{L}_{BD} &= \mathbb{E}_{x_b}\left[\frac{\sigma_T^{1/\tau}}{\sigma_S^{1/\tau}}\nabla\log\sigma_S\right], 
\label{eq:grad_bound}\\
\|\nabla L_T - \nabla L_S\|^2 &\leq 2(1 - \cos\theta),
\label{eq:angle_bound}
\end{align}
where $\theta$ is the angle between $\mathbb{S}$ and $\mathbb{T}$ gradients. Applying Taylor expansion~\cite{kanwal1989taylor} at high $\tau$, then we can obtain:
\begin{equation}
\cos\theta \geq 1 - \frac{1}{2}\mathcal{L}_{BD}\tau^{-2} + \mathcal{O}(\tau^{-4}).
\end{equation}
The Jacobian bound follows from Pinsker's inequality applied to the manifold tangent spaces.
\end{proof}
This theoretical guarantee explains the $0.91\%$ mIoU improvement observed in Table~\ref{rtab1}, as aligned Jacobians ensure consistent boundary localization.

\myparagraph{Contextual Graph Preservation}
The proposed $\mathcal{L}_{CD}$ in Eq.~\eqref{eq5} maintains spectral properties critical for dense image prediction tasks:

\begin{theorem}[Spectral Convergence]
For eigenvalues $\{\lambda_i\}$ of relation matrices $\mathbf{T}^R,\mathbf{S}^R$, we have:
\begin{equation}
\max_i|\lambda_i^T - \lambda_i^S| \leq \|\mathbf{T}^R - \mathbf{S}^R\|_F \leq \sqrt{d}\mathcal{L}_{CD}.
\end{equation}
\end{theorem}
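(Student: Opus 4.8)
The plan is to prove the chained inequality as two separate bounds. The \emph{left} inequality $\max_i|\lambda_i^T-\lambda_i^S|\le\|\mathbf{T}^R-\mathbf{S}^R\|_F$ is a purely spectral-perturbation statement, whereas the \emph{right} inequality $\|\mathbf{T}^R-\mathbf{S}^R\|_F\le\sqrt{d}\,\mathcal{L}_{CD}$ is the substantive part, since it links the algebraic distance between the relation matrices to the information-theoretic distillation objective of Eq.~\eqref{eq5}. I would treat these independently and then concatenate.

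For the left inequality, I would first record the structural fact that each relation matrix is the (temperature-scaled) row-softmax of the symmetric positive-semidefinite Gram matrix $O(\mathbf{T}^c)^T O(\mathbf{T}^c)$ in Eq.~\eqref{eq4}. Writing the row-softmax as $D^{-1}E$ with $E$ symmetric and $D$ a positive diagonal matrix of row sums, one sees that $\mathbf{T}^R$ is \emph{similar} to the symmetric matrix $D^{-1/2}ED^{-1/2}$, so its spectrum is real. On the symmetric surrogate, Weyl's inequality gives $|\lambda_i^T-\lambda_i^S|\le\|\mathbf{T}^R-\mathbf{S}^R\|_2$ for the eigenvalues taken in matched sorted order; combining with the elementary comparison $\|\cdot\|_2\le\|\cdot\|_F$ and maximizing over $i$ yields the claim. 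Alternatively I would invoke the Hoffman--Wielandt theorem to obtain $\sum_i|\lambda_i^T-\lambda_i^S|^2\le\|\mathbf{T}^R-\mathbf{S}^R\|_F^2$ directly, from which the maximum bound follows a fortiori.

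For the right inequality, the strategy is to read each row of $\mathbf{T}^R$ and $\mathbf{S}^R$ as a probability vector $p^{(i)}$, $q^{(i)}$ and to bound the Euclidean row distance by a divergence controlled by the per-row cross-entropy. Concretely, I would apply Pinsker's inequality row-wise, $\|p^{(i)}-q^{(i)}\|_2^2\le\|p^{(i)}-q^{(i)}\|_1^2\le 2\,D_{\mathrm{KL}}(p^{(i)}\,\|\,q^{(i)})$, then sum over rows so that the left-hand side assembles into $\|\mathbf{T}^R-\mathbf{S}^R\|_F^2$ while the right-hand side assembles, via $D_{\mathrm{KL}}(p\|q)=H(p,q)-H(p)$, into the aggregate cross-entropy defining $\mathcal{L}_{CD}$ in Eq.~\eqref{eq5}. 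The dimensional factor $\sqrt{d}$ would then be tracked through the $1/\sqrt{d}$ logit scaling and the $\tau$-dependent exponents in Eqs.~\eqref{eq4}--\eqref{eq5}, which rescale the softmax arguments and hence the Lipschitz constant relating logit perturbations to output perturbations.

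The hard part will be this final reconciliation of constants. Pinsker's inequality naturally produces a \emph{square-root} dependence $\|\mathbf{T}^R-\mathbf{S}^R\|_F\le\sqrt{2\mathcal{L}_{CD}}$ (after absorbing temperature and entropy terms), whereas the target is \emph{linear} in $\mathcal{L}_{CD}$ with coefficient $\sqrt{d}$. Closing this gap requires either a regime assumption under which $\sqrt{\mathcal{L}_{CD}}\le\mathcal{L}_{CD}$, with the residual dimensional dependence collected into $\sqrt{d}$, or a tighter loss-specific estimate that exploits the softmax-of-Gram structure of Eq.~\eqref{eq4} rather than the generic Pinsker bound. A secondary subtlety is that the row-softmax relation matrices are row-stochastic and generally \emph{non-normal}, so the clean eigenvalue bound is only legitimate after passing to the similar symmetric matrix $D^{-1/2}ED^{-1/2}$ and controlling the induced diagonal rescalings. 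I therefore expect the genuinely delicate work to lie in pinning down the exact $\sqrt{d}$ constant and justifying the similarity reduction, while the spectral-perturbation step itself is standard.
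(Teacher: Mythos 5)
Your treatment of the left inequality coincides with the paper's: the paper likewise invokes Weyl's inequality to get $|\lambda_i^T-\lambda_i^S|\le\|\Delta\mathbf{R}\|_2\le\|\Delta\mathbf{R}\|_F$. You are in fact more careful than the paper here, since $\mathbf{T}^R$ in Eq.~\eqref{eq4} is a softmax of a Gram matrix and hence generally non-symmetric, so Weyl does not apply directly; your reduction to the similar symmetric matrix $D^{-1/2}ED^{-1/2}$ (or the Hoffman--Wielandt fallback) is a necessary repair that the paper silently omits by simply declaring the matrices symmetric.

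For the right inequality $\|\mathbf{T}^R-\mathbf{S}^R\|_F\le\sqrt{d}\,\mathcal{L}_{CD}$, be aware that the paper does not actually prove it: its argument consists of a heat-kernel continuity estimate $\|e^{-\tau\mathcal{L}_T}-e^{-\tau\mathcal{L}_S}\|_F\le\tau e^{\tau\|\mathcal{L}_T\|}\mathcal{L}_{CD}$ for Laplacians that are never defined in terms of $\mathbf{T}^R,\mathbf{S}^R$, never connected to $\|\mathbf{T}^R-\mathbf{S}^R\|_F$, and never reduced to the stated constant $\sqrt{d}$; the final inequality $\|\cdots\|\le\cdots\mathcal{L}_{CD}$ is itself asserted without justification. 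Your Pinsker-based route is therefore a genuinely different, and more substantive, attempt at the missing step, and the obstruction you identify is real: a cross-entropy objective such as Eq.~\eqref{eq5} controls $D_{\mathrm{KL}}$ only after subtracting the teacher-row entropies, and Pinsker then yields $\|\mathbf{T}^R-\mathbf{S}^R\|_F\lesssim\sqrt{\mathcal{L}_{CD}}$, which is incompatible with a bound linear in $\mathcal{L}_{CD}$ unless one assumes $\mathcal{L}_{CD}\ge c>0$ or some other regime restriction. Neither you nor the paper closes this gap; the honest conclusion is that the theorem as stated is unproven in its second inequality, and your analysis correctly locates why. If you want a defensible statement, replace the right-hand side with $C\sqrt{\mathcal{L}_{CD}+H_{\max}}$ for an explicit constant depending on $\tau$ and $d$, or restate the loss as a KL divergence so that Pinsker applies cleanly.
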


\begin{proof}
Applying \textit{Weyl's inequality} for symmetric matrices, we can obtain:
\begin{equation}
    |\lambda_i^T - \lambda_i^S| \leq \|\Delta\mathbf{R}\|_2 \leq \|\Delta\mathbf{R}\|_F,
    \end{equation}
where the heat kernel continuity follows from:
    \begin{align}
    \|e^{-\tau\mathcal{L}_T} - e^{-\tau\mathcal{L}_S}\|_F &\leq \tau \sup_{t\in[0,\tau]}\|e^{-t\mathcal{L}_T}(\mathcal{L}_T - \mathcal{L}_S)e^{-(\tau-t)\mathcal{L}_S}\| \\
    &\leq \tau e^{\tau\|\mathcal{L}_T\|} \mathcal{L}_{CD} \label{eq:heat_bound}
    \end{align}
\end{proof}
As shown in Table~\ref{rtab1}, compared with the baseline model, the $1.77\%$ mIoU gain directly reflects this eigenvalue stability.

\myparagraph{Multi-Scale Geometric Consistency}
As illustrated in Figure~\ref{cdmanner}, the feature concatenation and projection operation in Section~\ref{sec:4:1} preserves topological invariants:

\begin{proposition}[Topological Preservation]
The mapping $\phi:\prod_i\mathcal{M}_T^{(i)}\to\mathcal{M}_T^{\text{concat}}$ satisfies:
\begin{equation}
\beta_k(\mathcal{M}_T^{\text{concat}}) = \sum_{i=1}^5 \beta_k(\mathcal{M}_T^{(i)}), \quad k=0,1,2
\end{equation}
where $\beta_k$ are Betti numbers.
\end{proposition}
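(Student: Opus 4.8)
The plan is to treat $\phi$ as a map of nicely triangulable spaces (CW complexes or smooth manifolds) and reduce the computation of $\beta_k(\mathcal{M}_T^{\text{concat}})$ to the topology of the factors $\mathcal{M}_T^{(i)}$ by a standard homology-of-a-gluing argument. Because the asserted identity is an additivity statement over the five scales, there are two natural tools: the Künneth theorem, if $\phi$ realizes $\mathcal{M}_T^{\text{concat}}$ as the product $\prod_i \mathcal{M}_T^{(i)}$ up to homotopy, and a Mayer--Vietoris long exact sequence, if $\mathcal{M}_T^{\text{concat}}$ is instead covered by five subcomplexes, one per scale, whose mutual intersections are controlled. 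I would work with field coefficients (say $\mathbb{R}$) throughout, so that all $\mathrm{Tor}$ terms vanish and $\beta_k = \dim_{\mathbb{R}} H_k(\,\cdot\,;\mathbb{R})$, turning each homological claim into a bookkeeping of dimensions.

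First I would show that the composite $\textrm{Conv}_{3\times3}\circ\textrm{concat}$ of Section~\ref{sec:4:1} restricts to an embedding on the relevant feature support, so that $\phi$ is a homeomorphism onto its image and hence a homotopy equivalence; this lets me replace $\mathcal{M}_T^{\text{concat}}$ by $\phi\bigl(\prod_i \mathcal{M}_T^{(i)}\bigr)$ without perturbing any Betti number. Second, I would pin down the homotopy type of this image --- either as the full product or as a gluing/coproduct of the five scale-manifolds --- depending on how channel-wise concatenation interacts with the shared spatial base. Third, I would feed the resulting decomposition into Künneth or Mayer--Vietoris and read off $H_0,H_1,H_2$, verifying the three asserted equalities degree by degree.

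The main obstacle is the interplay of the second and third steps: the Künneth theorem does \emph{not} yield an additive rule but a convolution, $\beta_k\bigl(\prod_i\mathcal{M}_T^{(i)}\bigr) = \sum_{j_1+\cdots+j_5=k}\prod_i \beta_{j_i}(\mathcal{M}_T^{(i)})$, which collapses to the claimed $\sum_i \beta_k(\mathcal{M}_T^{(i)})$ only under strong acyclicity of all but one factor, and which disagrees at $k=0$ (Künneth gives $\prod_i \beta_0 = 1$ for connected factors, whereas the proposition asserts $\sum_i \beta_0 = 5$). Since the additive form is precisely the Betti-number rule for a \emph{disjoint union}, making the statement hold as written requires arguing that the concatenation-induced gluing is homotopically a coproduct rather than a product --- equivalently, that the scale-manifolds meet in acyclic (or empty) intersections so the Mayer--Vietoris connecting maps vanish and the sequence splits as $H_k(\mathcal{M}_T^{\text{concat}})\cong\bigoplus_i H_k(\mathcal{M}_T^{(i)})$. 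Establishing that independence condition, and reconciling the $k=0$ normalization (most cleanly by passing to reduced homology), is where the real work --- and the main risk to the proposition as stated --- lies.
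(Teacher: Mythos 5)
Your proposal does not close the proof, but your diagnosis of \emph{why} it cannot be closed is exactly right, and it is worth telling you that the paper's own proof commits precisely the error you flag. The paper argues in two steps: first, that the $3\times3$ convolution is a diffeomorphism, so $\beta_k(\phi(\mathbf{T}^c)) = \beta_k(\mathbf{T}^c)$ by homotopy invariance; second, that $\beta_k(\oplus_i \mathcal{M}_T^{(i)}) = \sum_{i=1}^5 \beta_k(\mathcal{M}_T^{(i)})$ ``by the K\"unneth formula.'' As you correctly observe, K\"unneth for a product yields the convolution $\sum_{j_1+\cdots+j_5=k}\prod_i \beta_{j_i}(\mathcal{M}_T^{(i)})$, not the additive rule; additivity of Betti numbers is the formula for a disjoint union (or, in reduced homology, a wedge), and the paper never establishes that channel-wise concatenation over a \emph{shared} spatial grid produces a coproduct rather than a product-like or fibered object. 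Your $k=0$ sanity check (connected factors give $\prod_i\beta_0 = 1$ under K\"unneth versus the asserted $\sum_i\beta_0 = 5$) is the cleanest witness that the cited tool cannot yield the stated identity. The paper simply writes $\oplus_i$ and labels the step ``K\"unneth,'' so the decomposition question you identify as the real work is left entirely unaddressed there.

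Two further points you should press on if you pursue this. First, the paper's opening claim that a $3\times3$ convolution is a diffeomorphism is itself untenable in the stated setting: the projection maps $5$ concatenated feature maps ($5\times256$ channels) down to $256$ channels, so it is a non-injective linear-plus-nonlinear map, and projections do not in general preserve Betti numbers (the paper's appeal to a ``classical projection theorem'' concerns Hausdorff dimension of generic projections, not homology). Second, your proposed repair routes --- either proving the scale-manifolds embed with empty or acyclic pairwise intersections so Mayer--Vietoris splits, or passing to reduced homology to fix the $k=0$ normalization --- are the only ways the statement could hold as written, and neither is available without additional hypotheses on the feature manifolds that the paper does not supply. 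So treat your third paragraph not as an obstacle to your own argument but as a correct identification of a gap in the proposition itself.
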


\begin{proof}
The 3$\times$3 convolution operation is a diffeomorphism, thus we have:
\begin{align}
\beta_k(\phi(\mathbf{T}^c)) &= \beta_k(\mathbf{T}^c) \quad \text{(invariance)} \\
&= \beta_k(\oplus_i \mathcal{M}_T^{(i)}) \nonumber \\
&= \sum_{i=1}^5 \beta_k(\mathcal{M}_T^{(i)}) \quad \text{(Künneth formula)} \label{eq:topo_preserve}
\end{align}
The dimensionality bound follows from the classical projection theorem as in~\cite{falconer1996projection}.
\end{proof}

\myparagraph{Training Dynamics Interpretation}
The used weight decay in Eq.~\eqref{eq8} induces phased learning:

\begin{theorem}[Annealed Convergence]
With $r(t) = 1 - (t-1)/t_{\max}$ and Robbins-Monro conditions on learning rate $\eta_t$:
\begin{equation}
\lim_{t\to t_{\max}} \mathbb{P}(\mathcal{L} = \mathcal{L}_{SS}) = 1
\end{equation}
\end{theorem}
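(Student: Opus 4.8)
The plan is to split the argument into a deterministic annealing part and a stochastic convergence part, and then combine them. First I would fix the reading of the limit: since $t$ is a discrete epoch index, the schedule gives $r(t_{\max}) = 1 - (t_{\max}-1)/t_{\max} = 1/t_{\max}$, so the statement is to be understood in the regime $t_{\max}\to\infty$, where the annealing coefficient decays to zero. The objective in Eq.~\eqref{eq8} factors as $\mathcal{L} = \mathcal{L}_{SS} + r(t)\left(\alpha\mathcal{L}_{BD} + \beta\mathcal{L}_{CD}\right)$, so it suffices to show that the annealed auxiliary term vanishes almost surely along the training trajectory.

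Second, I would establish a uniform bound on the distillation losses so that the prefactor $r(t)$ alone controls the decay. The boundary loss $\mathcal{L}_{BD}$ is a temperature-scaled cross-entropy over spatial-softmax outputs, hence bounded above; combined with the Jacobian estimate from the Boundary Consistency theorem, $\mathcal{L}_{BD}$ remains in a bounded range. Likewise, the Spectral Convergence theorem controls $\|\mathbf{T}^R - \mathbf{S}^R\|_F$ and thereby bounds $\mathcal{L}_{CD}$. Setting $C := \sup_t\,(\alpha\mathcal{L}_{BD} + \beta\mathcal{L}_{CD}) < \infty$, the annealed contribution then satisfies $0 \le r(t)(\alpha\mathcal{L}_{BD}+\beta\mathcal{L}_{CD}) \le C/t_{\max}$, which tends to $0$ deterministically.

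Third, I would address the stochastic optimization itself. Under the Robbins-Monro conditions $\sum_t \eta_t = \infty$ and $\sum_t \eta_t^2 < \infty$, together with a bounded-variance hypothesis on the stochastic gradients, I would invoke the Robbins-Siegmund almost-sure convergence lemma (the supermartingale convergence theorem underlying stochastic approximation). Treating the iterate sequence as a stochastic approximation of the gradient flow of $\mathcal{L}_{SS} + r(t)(\alpha\mathcal{L}_{BD}+\beta\mathcal{L}_{CD})$, the vanishing factor $r(t)$ makes the auxiliary gradient an asymptotically negligible drift; the lemma then yields almost-sure convergence of the iterates to a critical point of the unperturbed objective $\mathcal{L}_{SS}$, giving $\mathbb{P}(\mathcal{L} = \mathcal{L}_{SS}) \to 1$.

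The hard part will be the last step: rigorously transferring the \emph{deterministic} decay of $r(t)$ into the \emph{probabilistic} identity $\mathbb{P}(\mathcal{L}=\mathcal{L}_{SS})=1$. This requires showing that the accumulated drift from the annealed term does not disrupt the supermartingale inequality of Robbins-Siegmund, and justifying the bounded-variance gradient-noise assumption, which is posited rather than proved in the empirical setting. A secondary subtlety is that at any finite $t_{\max}$ the residual weight $1/t_{\max}$ is strictly positive, so the equality $\mathcal{L}=\mathcal{L}_{SS}$ holds only asymptotically; the probability-one conclusion is therefore a limiting identity and not an exact one at the terminal epoch, a point I would state explicitly to avoid overclaiming.
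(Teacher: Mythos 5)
Your proposal is correct in spirit and rests on the same two pillars as the paper's own argument---the annealing coefficient $r(t)$ tending to zero and an appeal to stochastic approximation under the Robbins--Monro conditions---but it routes through a genuinely different decomposition. The paper decomposes the \emph{gradient flow}, writing $\frac{d\mathcal{L}}{dt}$ as a supervised term $-\eta_t\|\nabla\mathcal{L}_{SS}\|^2$ plus an annealed term $-\eta_t r(t)^2\left(\alpha^2\|\nabla\mathcal{L}_{BD}\|^2+\beta^2\|\nabla\mathcal{L}_{CD}\|^2\right)$, observes that the latter vanishes as $r(t)\to 0$, and cites stochastic approximation theory for the rest. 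You instead decompose the \emph{loss value} $\mathcal{L}=\mathcal{L}_{SS}+r(t)\left(\alpha\mathcal{L}_{BD}+\beta\mathcal{L}_{CD}\right)$, bound the auxiliary losses uniformly (leaning on the Boundary Consistency and Spectral Convergence theorems), and obtain a deterministic $C/t_{\max}$ bound on the residual, reserving Robbins--Siegmund solely for the optimization trajectory. Your route is the more direct one for the statement actually being proved: the claim concerns the value of $\mathcal{L}$ collapsing to $\mathcal{L}_{SS}$, which follows from the vanishing weight plus boundedness alone, whereas the paper's gradient-norm decomposition (which, as written, also drops the cross terms that arise when expanding $\|\nabla\mathcal{L}\|^2$) speaks to which loss drives the updates rather than to the stated equality of loss values. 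You also make explicit two points the paper leaves silent: that $r(t_{\max})=1/t_{\max}>0$ at any finite horizon, so the identity can only hold as a limit in $t_{\max}\to\infty$; and that the probabilistic phrasing $\mathbb{P}(\mathcal{L}=\mathcal{L}_{SS})=1$ requires a bounded-variance hypothesis on the gradient noise that is assumed rather than established. Both caveats are fair, and your version is the more defensible of the two.
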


\begin{proof}
Decompose the gradient flow:
\begin{align}
\frac{d\mathcal{L}}{dt} &= -\eta_t\|\nabla\mathcal{L}_{SS}\|^2 \nonumber \\
&\quad - \eta_tr(t)^2(\alpha^2\|\nabla\mathcal{L}_{BD}\|^2 + \beta^2\|\nabla\mathcal{L}_{CD}\|^2) \label{eq:grad_flow}
\end{align}
As $r(t)\to 0$, the right terms vanish asymptotically. The convergence follows the stochastic approximation theory~\cite{lai2003stochastic}.
\end{proof}

\section{Experiments}
\subsection{Datasets and Evaluation Metrics}
\subsubsection{Datasets} To demonstrate the superior performance of our method, we conduct experiments on five representative yet challenging datasets: Pascal VOC 2012 \cite{everingham2010pascal}, Cityscapes \cite{cordts2016cityscapes}, ADE20K \cite{zhou2017scene}, and COCO-Stuff 10K \cite{caesar2018coco} for semantic segmentation (SSeg), as well as MS-COCO 2017 \cite{lin2014microsoft} for instance segmentation (ISeg) and object detection (ODet).
\begin{itemize}
\item The Pascal VOC 2012 dataset comprises $20$ object classes along with one background class. Following~\cite{zhang2021self,wang2020intra}, we utilized the augmented data, resulting in a total of $10,582$ images for \emph{training}, $1,449$ images for \emph{val}, and $1,456$ images for \emph{testing}. 
\item The Cityscapes dataset comprises a total of $5,000$ finely annotated images, which are partitioned into subsets of $2,975$, $500$, and $1,525$ images designated for \emph{training}, \emph{val}, and \emph{testing}, respectively. In alignment with existing methods~\cite{zheng2023distilling,wang2020intra,liu2019structured}, we exclusively employed the finely labeled data during the training phase to ensure a fair comparison of results.
\item The ADE20K dataset has $150$ object classes and is organized into three subsets: $20,000$ images for the \emph{training} set, $2,000$ images for the \emph{val} set, and $3,000$ images for the \emph{testing} set.
\item The COCO-Stuff 10K dataset is an extension of the MS-COCO dataset \cite{lin2014microsoft}, enriched with pixel-wise class labels. It consists of $9,000$ samples designated for \emph{training} and $1,000$ samples allocated for \emph{val}.
\item The MS-COCO 2017 dataset comprises $80$ object classes and includes a total of $118,000$ images for \emph{training}, and $5,000$ images for \emph{val}.
\end{itemize}
For data augmentation, random horizontal flip, brightness jittering and random scaling within the range of $[0.5, 2]$ are used in training as in~\cite{wang2020intra,zhang2021self,liu2019structured}. Experiments are implemented on the MMRazor framework\footnote{https://github.com/open-mmlab/mmrazor} under the PyTorch platform~\cite{paszke2019pytorch} using $8$ NVIDIA GeForce RTX 3090 GPUs. All the inference results are obtained at a single scale. 

\subsubsection{Evaluation metrics} 
Beyond employing standard metrics, we have also developed two extra specialized evaluation metrics specifically optimized for knowledge distillation on dense image prediction tasks, detailed below:

\myparagraph{Common metrics.} For SSeg, we utilize the mean intersection over union (mIoU) as the primary evaluation metric. For ISeg and ODet, average precision (AP) serves as the principal accuracy-specific metric. To assess model efficiency, we also consider the number of parameters (Params.) and the floating-point operations (FLOPs). 

\myparagraph{Manifold stability (MFS).} To assess the effectiveness on the learned feature manifolds, we compute the Lipschitz constant ratio between teacher and student models. Specifically, let $f_T^l(x), f_S^l(x) \in \mathbb{R}^{d_l}$ denote the $l$-th layer feature mappings for teacher and student models respectively. The layer-wise Lipschitz constant is estimated via:
\begin{equation}
\small
L_k^l = \sup_{x \in \mathcal{X}, \|\delta\| \leq \epsilon} \frac{\|f_k^l(x+\delta) - f_k^l(x)\|_2}{\|\delta\|_2}, \quad k \in \{T,S\}
\end{equation}
where $\mathcal{X}$ is the input space and $\epsilon = 0.1$ controls the perturbation scale. The MFS metric $\rho_l$ is then computed as:
\begin{equation}
\small
\rho_l = \frac{L_S^l}{L_T^l} \cdot \mathbb{I}(L_T^l > \tau) + \mathbb{I}(L_T^l \leq \tau)
\end{equation}
with threshold $\tau$ avoiding division by negligible values ($\tau=0.01$). Values close to 1 indicate well-preserved manifold structure during distillation, while significant deviations suggest potential degradation of geometric properties.

\myparagraph{Local Hausdorff distance (LHD).} For boundary-sensitive tasks like SSeg and ISeg, we introduce a LHD measure to evaluate boundary alignment quality. Specifically, for boundary point sets $\mathcal{B}_p = \{p_i\}_{i=1}^m$ and $\mathcal{B}_g = \{q_j\}_{j=1}^n$, 
the LHD at point $p_i$ is defined as:
\begin{equation}
\small
\begin{aligned}
\text{LHD}_r(p_i, \mathcal{B}_g) 
= \min\bigg\{ 
    &\max_{q_j \in N_r(p_i)} d(p_i,q_j), \\ 
    &\text{median}\big(\{d(p_i,q_j)\}_{q_j \in N_r(p_i)}\big) 
\bigg\},
\end{aligned}
\end{equation}
where the neighborhood $N_r(p_i)$ and final aggregation are:
\begin{equation}
\small
\begin{aligned}
N_r(p_i) &= \{q_j \in \mathcal{B}_g \mid \|p_i - q_j\|_2 \leq r\} \\
\text{LHD}(\mathcal{B}_p, \mathcal{B}_g) &= 
\frac{1}{|\mathcal{B}_p|}\sum_{i=1}^{|\mathcal{B}_p|} \text{LHD}_r(p_i, \mathcal{B}_g) \\ 
    &\cdot \mathbb{I}\big(\text{LHD}_r(p_i, \mathcal{B}_g) \leq \mu + 2\sigma\big),
\end{aligned}
\end{equation}
where $r$ is set to 5 in our implementation.
\subsection{Implementation Details}

\subsubsection{Baselines} For a fair result comparison and considering the realistic resource conditions of edge computing devices~\cite{dong20231920}, \emph{\textbf{for SSeg}}, we select \textbf{PSPNet-101}~\cite{zhao2017pyramid}, \textbf{DeepLabv3 Plus-101}~\cite{chen2018encoder}, and \textbf{Mask2Former}~\cite{cheng2022masked} for the teacher models. The student models are compact \textbf{PSPNet} and \textbf{DeepLabV3+} with ResNet-38, ResNet-18$_{(0.5)}$ and ResNet-18$_{(1.0)}$~\cite{he2016deep}. Besides, to demonstrate the superior effectiveness of our method on heterogeneous network architectures, following~\cite{wang2020intra,liu2019structured,yang2022cross}, we also employ \textbf{MobileNetV2}~\cite{liu2018lightnet}, \textbf{EfficientNet-B1}~\cite{tan2019efficientnet}, and \textbf{SegFormer-B0}~\cite{xie2021segformer} as the student models. 
\emph{\textbf{For ISeg and ODet}}, following~\cite{wang2024crosskd,zhang2023structured}, we select the representative \textbf{GFL}~\cite{li2020generalized}, \textbf{Cascade Mask R-CNN}~\cite{cai2018cascade}, and \textbf{RetinaNet}~\cite{ross2017focal} with ResNet-101/50 and ResNet-50/18 as the teacher model and the student model, respectively. 
\emph{While adopting cutting-edge large foundation models represents the prevailing research trend, we opt for a pragmatic small-model baseline given the currently insurmountable challenges in deploying such large-scale models on edge devices. This application-oriented approach prioritizes practical deployability over model scale, while still providing a meaningful benchmark for edge computing scenarios.} During the training and inference phases, aside from our proposed method and specific declarations, all other settings adhere to the configurations outlined in the baseline model.

\subsubsection{Training details} Following the standard practices as in~\cite{hinton2015distilling,liu2019structured,wang2020intra}, all teacher models are pre-trained on ImageNet-1k by default~\cite{deng2009imagenet}, and then fine-tuned on the corresponding dataset before their parameters are fixed in KD. During training, only the student's parameters are updated. As in~\cite{zhang2021self,phuong2019distillation}, $\tau$ is initialized to $1$ and is multiplied by a scaling factor of $1.05$ whenever the range of values (across all feature items in a given minibatch) exceeds $0.5$. Following~\cite{wang2020intra,sun2020distilling,yang2023effective}, $\alpha$ and $\beta$ is set to 10 and 50, respectively. We fully understand that fine-tuning these base hyperparameters could potentially enhance performance. However, we contend that such adjustments may be redundant and unwarranted.

\myparagraph{For SSeg models}, to accommodate the local hardware limitations, the training images are cropped into a fixed size of 512 $\times$ 512 pixels as in~\cite{wang2024crosskd,zhang2023structured}. The SGD is used as the optimizer with the ``poly'' learning rate strategy. The initial learning rate is set to $0.01$, with a power of $0.9$. To ensure fairness in the experimental comparisons, the batch size is set to $8$ and $t_{\textrm{max}}$ is set to $40,000$. 

\myparagraph{For ISeg and ODet}, the model is trained following the default 1$\times$ training schedule, \ie, $12$ epochs. The batch size is set to $8$, and AdamW is used as the optimizer with the initial learning rate of $1 \times 10^{-4}$ and the weight decay of $0.05$. The layer-wise learning rate decay is used and set to $0.9$, and the drop path rate is set to $0.4$. The given images are resized to the shorter side of $800$ pixels, with the longer side not exceeding $1,333$ pixels. In inference, the shorter side of images is consistently set to 800 pixels by default. 
\begin{table}[t]
\centering
\renewcommand\arraystretch{1.2}
\setlength{\tabcolsep}{9pt}{
\caption{Ablation results on the \emph{val} set of Pascal VOC 2012~\cite{everingham2010pascal} by adding each component of BCKD. ``WD'' denotes the weight-decay strategy as introduced in Section~\ref{sec:4:4}}
\label{rtab1}
\begin{tabular}{ccc|ccc}
\hline \hline 
\multicolumn{3}{r}{$\mathbb{T}$: PSPNet-101~\cite{zhao2017pyramid}} & 77.82 \% & 70.43\textbf{M} & 411.6\textbf{G} \\
\multicolumn{3}{r}{$\mathbb{S}$: PSPNet-18$_{(1.0)}$~\cite{zhao2017pyramid}} & 70.78 \% & 15.24\textbf{M} & 106.2\textbf{G} \\
\hline 
$\mathcal{L}_{BD}$ & $\mathcal{L}_{CD}$ & WD & mIoU (\%) & Params. & FLOPs\\
\hline 
\cmark & \xmark & \xmark & 71.69$_{\color{red}{+0.91}}$ & 15.24\textbf{M} & 106.2\textbf{G}\\ 
\xmark & \cmark & \xmark & 72.55$_{\color{red}{+1.77}}$ & 15.24\textbf{M} & 106.2\textbf{G} \\ 
\cmark & \cmark & \xmark & 73.98$_{\color{red}{+3.20}}$ & 15.24\textbf{M} & 106.2\textbf{G} \\ 
\cmark & \cmark & \cmark & 74.65$_{\color{red}{+3.87}}$ & 15.24\textbf{M} & 106.2\textbf{G} \\ 
\hline \hline 
\end{tabular}}
\vspace{-3mm}
\end{table}
\begin{figure}[t]
\centering
\includegraphics[width=.48\textwidth]{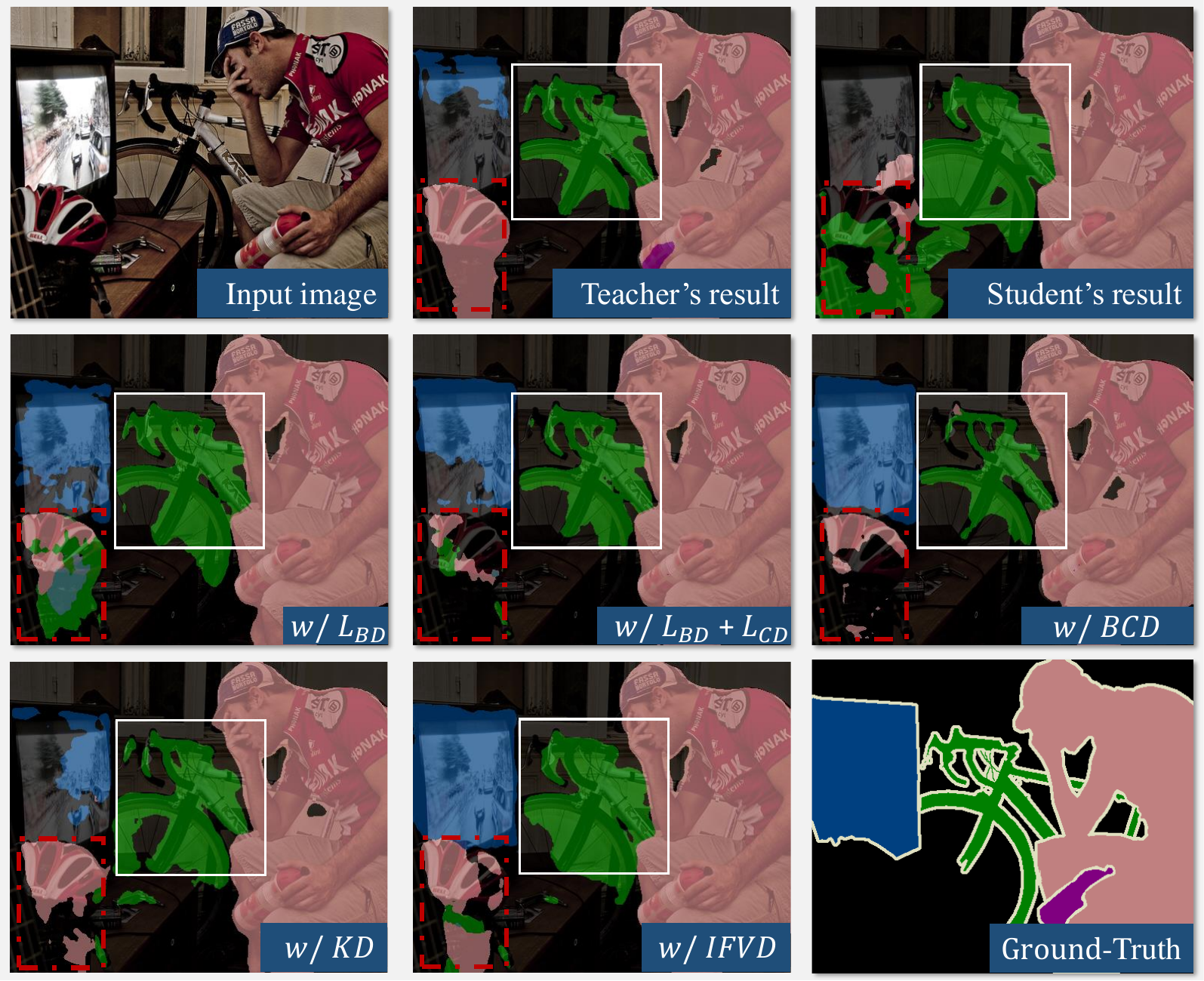}
\vspace{-3mm}
\caption{Visualized comparisons and results obtained by adding different components of BCKD. The teacher model and the student model denotes PSPNet-101~\cite{zhao2017pyramid} and PSPNet-18$_{(1.0)}$~\cite{zhao2017pyramid}, respectively. ``\emph{w/}'' denotes with the corresponding implementation. Samples are from Pascal VOC 2012~\cite{everingham2010pascal}.}
\vspace{-4mm}
\label{fig4}
\end{figure}
\begin{table}[htb]
\centering
\renewcommand\arraystretch{1.2}
\setlength{\tabcolsep}{10pt}{
\caption{Result comparisons on mIoU (\%) under different network architectures on the \emph{val} sets of Pascal VOC 2012~\cite{everingham2010pascal}, Cityscapes~\cite{cordts2016cityscapes}, and ADE20K~\cite{zhou2017scene}, and on the \emph{test} set of COCO-Stuff 10K~\cite{caesar2018coco}.}
\label{rtable2}
\begin{tabular}{r|lr}
\hline \hline 
Methods & Pascal VOC \emph{val} & Params. \\
\hline 
DANet~\cite{fu2019dual} & 80.40 & 83.1\textbf{M}\\
\hline 
$\mathbb{T}$: PSPNet-101~\cite{zhao2017pyramid} & 77.82 & 70.4\textbf{M}\\
\hline 
$\mathbb{S}$: PSPNet-38~\cite{zhao2017pyramid} & 72.65 & 58.6\textbf{M}\\
\cellcolor[gray]{.95}\textbf{+ BCKD$_{\textrm{ours}}$} & \cellcolor[gray]{.95}75.15$_{\color{red}{+2.50}}$ & \cellcolor[gray]{.95}58.6\textbf{M} \\ 
$\mathbb{S}$: EfficientNet-B1~\cite{tan2019efficientnet} & 69.28 & 6.7\textbf{M} \\
\cellcolor[gray]{.95}\textbf{+ BCKD$_{\textrm{ours}}$} & \cellcolor[gray]{.95}73.81$_{\color{red}{+4.53}}$  & \cellcolor[gray]{.95}6.7\textbf{M} \\ 
$\mathbb{S}$: SegFormer-B0~\cite{xie2021segformer} & 66.75$^\ddag$ & 3.8\textbf{M}\\
\cellcolor[gray]{.95}\textbf{+ BCKD$_{\textrm{ours}}$} & \cellcolor[gray]{.95}68.88$_{\color{red}{+2.13}}$  & \cellcolor[gray]{.95}3.8\textbf{M} \\
\hline \hline 
Methods & Cityscapes \emph{val} & Params. \\
\hline 
HRNet~\cite{sun2019high} & 81.10 & 65.9\textbf{M}\\
\hline 
$\mathbb{T}$: PSPNet-101~\cite{zhao2017pyramid} & 78.56 & 70.4\textbf{M}\\
\hline 
$\mathbb{S}$: PSPNet-38~\cite{zhao2017pyramid} & 71.26 & 47.4\textbf{M}\\
\cellcolor[gray]{.95}\textbf{+ BCKD$_{\textrm{ours}}$} & \cellcolor[gray]{.95}73.56$_{\color{red}{+2.30}}$ & \cellcolor[gray]{.95}47.4\textbf{M}\\
$\mathbb{S}$: EfficientNet-B1~\cite{tan2019efficientnet} & 60.40 & 6.7\textbf{M}\\
\cellcolor[gray]{.95}\textbf{+ BCKD$_{\textrm{ours}}$} & \cellcolor[gray]{.95}63.91$_{\color{red}{+3.51}}$ & \cellcolor[gray]{.95}6.7\textbf{M} \\ 
$\mathbb{S}$: SegFormer-B0~\cite{xie2021segformer} & 76.20 & 3.8\textbf{M}\\
\cellcolor[gray]{.95}\textbf{+ BCKD$_{\textrm{ours}}$} & \cellcolor[gray]{.95}77.87$_{\color{red}{+1.67}}$ & \cellcolor[gray]{.95}3.8\textbf{M}\\
\hline \hline 
Methods & ADE20K \emph{val} & Params.\\
\hline 
DeepLab V3~\cite{chen2017deeplab} & 43.28 & 71.3\textbf{M} \\
\hline 
$\mathbb{T}$: PSPNet-101~\cite{zhao2017pyramid} & 42.19 & 70.4\textbf{M}\\
\hline 
$\mathbb{S}$: ESPNet~\cite{mehta2018espnet} & 20.13 & 0.4\textbf{M}\\
\cellcolor[gray]{.95}\textbf{+ BCKD$_{\textrm{ours}}$} & \cellcolor[gray]{.95}23.65$_{\color{red}{+3.52}}$ & \cellcolor[gray]{.95}0.4\textbf{M}\\ 
$\mathbb{S}$: MobileNetV2~\cite{liu2018lightnet} & 33.64 & 8.3\textbf{M} \\
\cellcolor[gray]{.95}\textbf{+ BCKD$_{\textrm{ours}}$} & \cellcolor[gray]{.95}36.59$_{\color{red}{+2.95}}$  & \cellcolor[gray]{.95}8.3\textbf{M} \\ 
$\mathbb{S}$: SegFormer-B0~\cite{xie2021segformer} & 37.40 & 3.8\textbf{M}\\
\cellcolor[gray]{.95}\textbf{+ BCKD$_{\textrm{ours}}$} & \cellcolor[gray]{.95}38.75$_{\color{red}{+1.35}}$ & \cellcolor[gray]{.95}3.8\textbf{M} \\
\hline 
$\mathbb{T}$: Mask2Former~\cite{cheng2022masked} & 47.20 & 44.0\textbf{M} \\
\hline 
\cellcolor[gray]{.95}$\mathbb{S}$: \textbf{ESPNet + BCKD$_{\textrm{ours}}$} & \cellcolor[gray]{.95}24.26$_{\color{red}{+4.13}}$ & \cellcolor[gray]{.95}0.4\textbf{M} \\
\cellcolor[gray]{.95}$\mathbb{S}$: \textbf{MobileNetV2 + BCKD$_{\textrm{ours}}$} & \cellcolor[gray]{.95}37.09$_{\color{red}{+3.45}}$ & \cellcolor[gray]{.95}8.3\textbf{M} \\
\cellcolor[gray]{.95}$\mathbb{S}$: \textbf{SegFormer-B0 + BCKD$_{\textrm{ours}}$} & \cellcolor[gray]{.95}39.61$_{\color{red}{+2.21}}$ & \cellcolor[gray]{.95}3.8\textbf{M}\\
\hline \hline 
Methods & COCO 10K \emph{test} & FLOPs\\
\hline 
SegVIT~\cite{zhang2022segvit} & 50.3 & 383.9\textbf{G}\\
\hline 
$\mathbb{T}$: DeepLabV3 Plus-101~\cite{chen2018encoder} & 33.10 & 366.9\textbf{G}\\
\hline 
$\mathbb{S}$: MobileNetV2~\cite{liu2018lightnet} & 26.29 & 1.4\textbf{G}\\
\cellcolor[gray]{.95}\textbf{+ BCKD$_{\textrm{ours}}$} & \cellcolor[gray]{.95}28.31$_{\color{red}{+2.02}}$ & \cellcolor[gray]{.95}1.4\textbf{G} \\ 
$\mathbb{S}$: SegFormer-B0~\cite{xie2021segformer} & 35.60 & 8.4\textbf{G}\\
\cellcolor[gray]{.95}\textbf{+ BCKD$_{\textrm{ours}}$} & \cellcolor[gray]{.95}35.92$_{\color{red}{+0.32}}$ & \cellcolor[gray]{.95}8.4\textbf{G} \\
\hline \hline 
\end{tabular}}
\vspace{-6mm}
\end{table}

\subsection{Ablation Analysis}
\label{ablation}
In our ablation analysis, we aim to explore answers of the following crucial questions: \emph{1)}~the impact of each component within BCKD; \emph{2)}~the effectiveness of BCKD across different network architectures; and \emph{3)}~the visualized performance and comparisons with other methods. We select the SSeg task as the experimental objective. 
\subsubsection{Effectiveness of each component} 
To explore answer of the first question, we chose Pascal VOC 2012 \cite{everingham2010pascal} as the experimental dataset. PSPNet-101~\cite{zhao2017pyramid} serves as the teacher model, while PSPNet-18$_{(1.0)}$ is used as the student model.
Table~\ref{rtab1} shows the inference results by adding each component of BCKD into the student model, where we report the experimental results on the \emph{val} set. We can observe that incorporating these components consistently improves the model accuracy, indicating the effectiveness of these components. In particular, adding $\mathcal{L}_{BD}$ resulted in a mIoU$\uparrow$ gain of $0.91\%$, which may be attributed to the fact that the object boundary regions are relatively small in proportion to the entire image~\cite{zhang2021self,liu2024bpkd}. With only the implementation of $\mathcal{L}_{BD}$ and $\mathcal{L}_{CD}$, our model can achieve the competitive 73.98\% mIoU with $3.20\%$ mIoU$\uparrow$ improvements, which verifies the importance of boundary and context information in EDIP. Furthermore, this result demonstrates that our$\mathcal{L}_{BD}$ and $\mathcal{L}_{CD}$ do not conflict in practical deployment; instead, they complement each other intrinsically to enhance performance.
It is also worth noting that compared to the advanced methods in Table~\ref{rtab3} that do not utilize ground truth mask, our method also achieves competitive results even without employing the weight-decay strategy. Building upon $\mathcal{L}_{BD}$ and $\mathcal{L}_{CD}$, adding the weight-decay strategy brings $0.57\%$ mIoU$\uparrow$, which confirms the weight importance of different losses. Furthermore, there is no increase in the number of Params. or FLOPs in the inference stage.
\begin{figure*}[htb]
\centering
\includegraphics[width=1\textwidth]{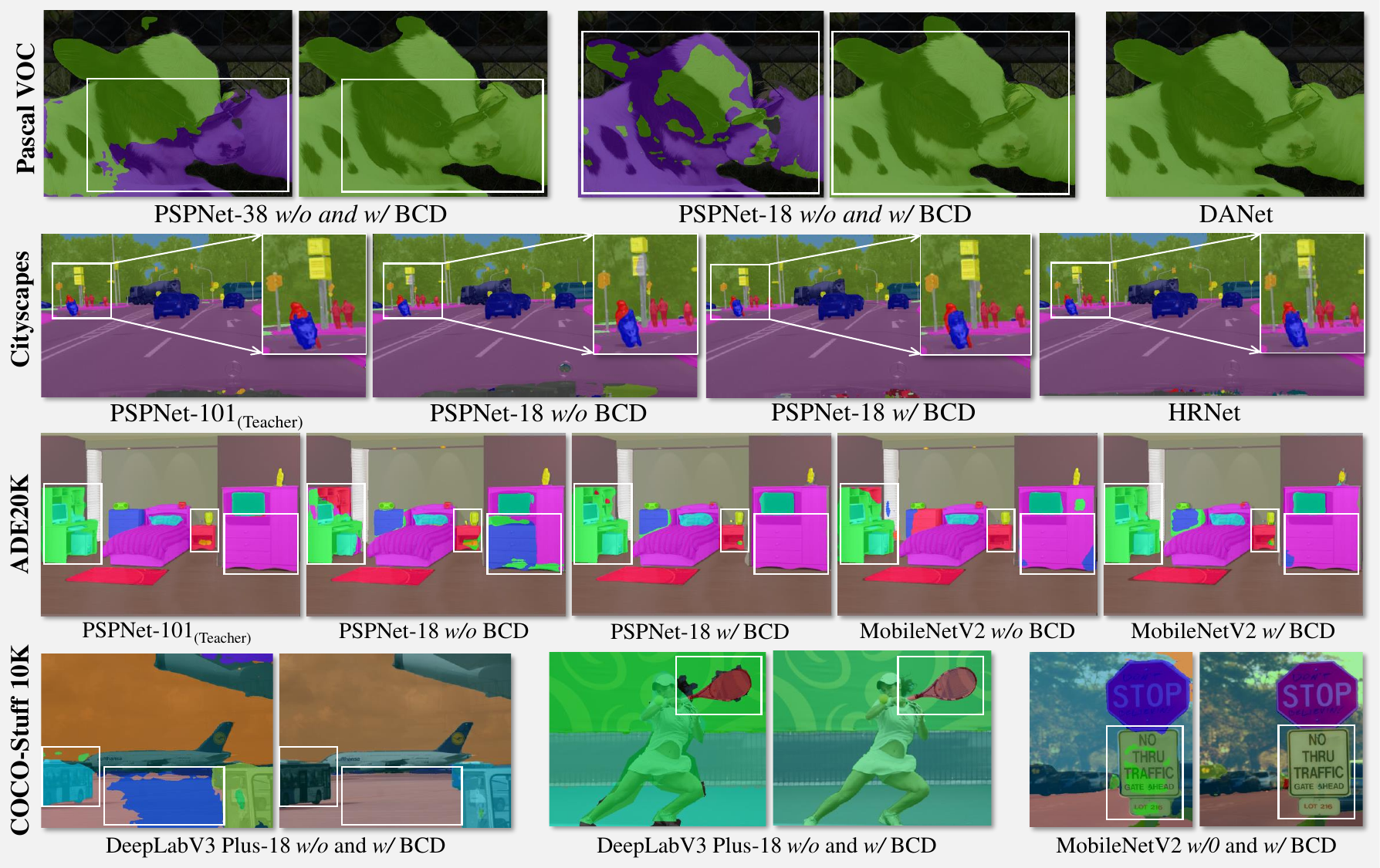}
\vspace{-6mm}
\caption{Visualizations on SSeg. DANet~\cite{fu2019dual} and HRNet~\cite{sun2019high} have been included for comparison as well. ``\textsl{w/o}'' means ``without'' and ``\textsl{w/}'' means ``with'', indicating whether our method is NOT implemented or implemented. The white bounding boxes highlight the regions where our method predicts better.}
\vspace{-4mm}
\label{fig3}
\end{figure*}


We also employ visualizations as a means of verifying the efficacy of BCKD components incrementally into the baseline model. The obtained results are presented in Figure~\ref{fig4}, which demonstrate that the inclusion of $\mathcal{L}_{BD}$ and $\mathcal{L}_{CD}$ in a sequential manner leads to enhanced boundary regions and object connectivity. For example, the ``\emph{bike handlebar}''. Moreover, the integration of the weight-decay strategy results in further improvement in the overall segmentation predictions. 
In addition to using white bounding boxes to emphasize the better regions achieved by our method, we also highlighted the regions of prediction failure using \textcolor{red}{red dashed bounding boxes}. We can observe that although our BCKD significantly improves the prediction quality of these regions compared to the student model’s results, there are still some incomplete predictions. This case may be caused by the spurious correlation between the ``\emph{helmet}'' and the ``\emph{person}'' in the used dataset, which can be eliminated through causal intervention~\cite{zhang2020causal}.

\subsubsection{Effectiveness across network architectures} 
In this section, we evaluate the effectiveness of our BCKD across various network architectures for SSeg using: Pascal VOC 2012~\cite{everingham2010pascal}, Cityscapes~\cite{cordts2016cityscapes}, ADE20K~\cite{zhou2017scene}, and COCO-Stuff 10K~\cite{caesar2018coco}. The obtained results under different network architectures are given in Table~\ref{rtable2}. 
For the purpose of comparing experimental results, we also include the results of a large model that does not utilize knowledge distillation for each dataset.
From this table, we can observe that deploying BCKD on different network architectures can lead to continuous performance improvements. For example, when employing PSPNet-38~\cite{zhao2017pyramid}, EfficientNet-B1~\cite{tan2019efficientnet}, and SegFormer-B0~\cite{xie2021segformer} as student models while keeping the teacher model PSPNet-101~\cite{zhao2017pyramid} unchanged, our BCKD achieves mIoU$\uparrow$ improvements of $2.50\%$, $4.53\%$, and $2.13\%$ on Pascal VOC \emph{val}, and $2.30\%$, $3.51\%$, and $1.67\%$ on Cityscapes \emph{val}, respectively. The performance gains demonstrate the effectiveness of our method not only within the same network architecture but also across network architectures, indicating sustained performance enhancements. This phenomenon also highlights the generalization capacity of our method. Besides, similar conclusions can be also drawn from our experimental results on ADE20K \emph{val} and COCO-Stuff 10K \emph{test} sets. Deploying different teacher models on the ADE20K \emph{val} dataset are also conducted, where Mask2Former~\cite{cheng2022masked} is utilized as the teacher model, and ESPNet~\cite{mehta2018espnet}, MobileNetV2~\cite{liu2018lightnet}, and SegFormer-B0 are used as the student models. The results demonstrate that our method yields mIoU$\uparrow$ improvements of $4.13\%$, $3.45\%$, and $2.21\%$ on ESPNet, MobileNetV2, and SegFormer-B0~\cite{xie2021segformer}, respectively, showcasing the strong flexibility. 
On efficiency, our method leverages the KD framework, resulting in no increase in Params. or FLOPs. Consequently, we achieve both improved accuracy and fast inference speed.
\begin{table*}[t]
\centering
\renewcommand\arraystretch{1.2}
\setlength{\tabcolsep}{5pt}{
\caption{Comparisons on mIoU (\%) with state-of-the-art methods on the \emph{val} sets of Pascal VOC 2012~\cite{everingham2010pascal} and Cityscapes~\cite{cordts2016cityscapes}, and ADE20K~\cite{zhou2017scene}. ``$\ddag$'' denotes our re-implemented result based on the released codes due to inconsistencies in experimental settings. 
``KD Manner'':  knowledge distillation manner, which contains of layer-to-layer (L2L) and whole-to-whole (W2W) as illustrated in Figure~\ref{cdmanner}.
``Ground-truth Mask'': ground-truth mask used to obtain the pre-extraction image boundaries. ``${\emph{E}}$'': physical edge. ``${\emph{B}}$'': semantic boundary. ``${\emph{P}}$'': pixel-wise. ``${\emph{I}}$'': image-wise. ``${\emph{O}}$'': object-wise.}
\label{rtab3}
\begin{tabular}{r|c|c|c|c|r|r|r}
\hline \hline 
\multicolumn{5}{r}{$\mathbb{T}$: PSPNet-101~\cite{zhao2017pyramid}} & 77.82\% & 78.56\% & 42.19\% \\
\multicolumn{5}{r}{$\mathbb{S}$: PSPNet-18$_{(1.0)}$~\cite{zhao2017pyramid}} & 70.78\% & 69.10\% & 33.82\% \\
\hline 
Methods & KD Manner &Ground-truth mask? & Boundary type & Context type & Pascal VOC 2012  & Cityscapes & ADE20K\\
\hline 
+ KD~\cite{hinton2015distilling} & L2L &\xmark & \xmark & \xmark & 71.28$^\ddag$$_{\color{red}{+0.50}}$  & 71.20$_{\color{red}{+2.10}}$ & 34.33$^\ddag$$_{\color{red}{+0.51}}$ \\ 
+ SKD~\cite{liu2019structured} & L2L & \cmark & ${\emph{E}}$ & ${\emph{P}}$ &  73.05$_{\color{red}{+2.27}}$  & 71.45$_{\color{red}{+2.35}}$ & 34.65$_{\color{red}{+0.83}}$ \\ 
+ SCKD~\cite{zhu2021student} & L2L & \xmark & \xmark & \xmark & 72.33$_{\color{red}{+1.55}}$ & 72.10$_{\color{red}{+3.00}}$ & 34.76$_{\color{red}{+0.94}}$ \\ 
+ CIRKD~\cite{yang2022cross} & L2L & \xmark & \xmark & ${\emph{I}}$ &  73.57$^\ddag$$_{\color{red}{+2.79}}$ & 72.25$_{\color{red}{+3.15}}$ & 34.93$_{\color{red}{+1.11}}$ \\ 
+ IFD~\cite{chen2022improved}& L2L  & \xmark & \xmark & \xmark &  73.88$^\ddag$$_{\color{red}{+3.10}}$ & 72.63$_{\color{red}{+3.53}}$ & 35.15$^\ddag$$_{\color{red}{+1.33}}$ \\
+ FGKD~\cite{yang2022focal}& L2L  & \xmark & \xmark & ${\emph{O}}$ &  72.90$^\ddag$$_{\color{red}{+2.12}}$ & 72.55$_{\color{red}{+3.45}}$ & 35.24$_{\color{red}{+1.42}}$  \\ 
+ CWT~\cite{liu2023simple} & L2L & \xmark & \xmark & ${\emph{O}}$ &  73.06$^\ddag$$_{\color{red}{+2.28}}$  & 72.60$_{\color{red}{+3.50}}$ & 35.21$^\ddag$$_{\color{red}{+1.39}}$ \\ 
+ SlimSeg~\cite{xue2022slimseg} & L2L & \cmark & ${\emph{E}}$ & \xmark &  74.08$_{\color{red}{+3.30}}$ & 73.95$_{\color{red}{+4.85}}$ & 37.12$_{\color{red}{+3.30}}$ \\
+ BGLSSeg~\cite{zhou2024boundary}& L2L & \cmark & ${\emph{E}}$ & \xmark & 74.27$^\ddag$$_{\color{red}{+3.49}}$ & 74.10$^\ddag$$_{\color{red}{+5.00}}$ & 36.49$^\ddag$$_{\color{red}{+2.67}}$ \\
+ FAM~\cite{pham2024frequency}& L2L  & \xmark & \xmark & \xmark &  74.28$^\ddag$$_{\color{red}{+3.50}}$ & 74.25$_{\color{red}{+5.15}}$ & 36.82$^\ddag$$_{\color{red}{+3.00}}$ \\
+ CrossKD~\cite{wang2024crosskd} & L2L & \xmark & \xmark & \xmark & 74.28$^\ddag$$_{\color{red}{+3.50}}$ & 74.28$_{\color{red}{+5.18}}$ & 36.72$^\ddag$$_{\color{red}{+2.90}}$\\
+ BPKD~\cite{liu2024bpkd} & L2L & \cmark & ${\emph{E}}$ & \xmark &  74.30$^\ddag$$_{\color{red}{+3.52}}$ & 74.29$_{\color{red}{+5.19}}$ & 37.07$^\ddag$$_{\color{red}{+3.25}}$ \\
\cellcolor[gray]{.95}\textbf{+ BCKD$_{\textrm{ours}}$}& \cellcolor[gray]{.95}W2W  & \cellcolor[gray]{.95}\xmark & \cellcolor[gray]{.95}${\emph{B}}$ & \cellcolor[gray]{.95}${\emph{P}}$ & \cellcolor[gray]{.95}74.65$_{\color{red}{+3.87}}$ & \cellcolor[gray]{.95}74.92$_{\color{red}{+5.82}}$ & \cellcolor[gray]{.95}37.62$_{\color{red}{+3.80}}$ \\ 
\hline 
+ IFVD~\cite{wang2020intra}& L2L  & \xmark & \xmark & \xmark &  74.05$_{\color{red}{+3.27}}$  & 74.41$_{\color{red}{+5.31}}$ & 36.63$^\ddag$$_{\color{red}{+3.35}}$ \\ 
\cellcolor[gray]{.95}\textbf{+ IFVD + BCKD$_{\textrm{ours}}$} & \cellcolor[gray]{.95}W2W & \cellcolor[gray]{.95}\xmark & \cellcolor[gray]{.95}${\emph{B}}$ & \cellcolor[gray]{.95}${\emph{P}}$ &  \cellcolor[gray]{.95}74.82$_{\color{red}{+4.04}}$  & \cellcolor[gray]{.95}74.99$_{\color{red}{+5.89}}$ & \cellcolor[gray]{.95}37.73$_{\color{red}{+3.91}}$ \\ 
+ TAT~\cite{lin2022knowledge}& L2L  & \xmark & \xmark & ${\emph{O}}$ &  74.02$^\ddag$$_{\color{red}{+3.24}}$ & 74.48$_{\color{red}{+5.38}}$ & 37.12$_{\color{red}{+3.30}}$ \\ 
\cellcolor[gray]{.95}\textbf{+ TAT + BCKD$_{\textrm{ours}}$}& \cellcolor[gray]{.95}W2W  & \cellcolor[gray]{.95}\xmark & \cellcolor[gray]{.95}${\emph{B}}$ & \cellcolor[gray]{.95}${\emph{P}}$\&${\emph{O}}$ &  \cellcolor[gray]{.95}74.71$_{\color{red}{+3.93}}$  & \cellcolor[gray]{.95}74.97$_{\color{red}{+5.87}}$ & \cellcolor[gray]{.95}38.12$_{\color{red}{+4.30}}$ \\ 
+ SSTKD~\cite{ji2022structural} & L2L & \cmark & ${\emph{E}}$ & \xmark &  73.91$_{\color{red}{+3.13}}$  & 74.60$_{\color{red}{+5.50}}$ & 37.22$_{\color{red}{+3.40}}$ \\ 
\cellcolor[gray]{.95}\textbf{+ SSTKD + BCKD$_{\textrm{ours}}$} & \cellcolor[gray]{.95}W2W & \cellcolor[gray]{.95}\cmark & \cellcolor[gray]{.95}${\emph{B}}$\&${\emph{E}}$ & \cellcolor[gray]{.95}${\emph{P}}$ &  \cellcolor[gray]{.95}74.52$_{\color{red}{+3.74}}$ & \cellcolor[gray]{.95}74.90$_{\color{red}{+5.80}}$ & \cellcolor[gray]{.95}38.24$_{\color{red}{+4.52}}$  \\ 
\hline \hline 
\end{tabular}}
\end{table*}

\subsubsection{Visualized comparisons} 
The visualized comparisons on the SSeg task with the baseline teacher and student models, and large models without the KD strategy are given in Figure~\ref{fig3}.
As highlighted by the white bounding boxes, the obtained results on Pascal VOC 2012~\cite{everingham2010pascal}, Cityscapes~\cite{cordts2016cityscapes}, ADE20K~\cite{zhou2017scene}, and COCO-Stuff 10K~\cite{caesar2018coco} demonstrate that BCKD yields significant improvements on both the boundary region completeness and the target region connectivity, when compared with the small student model's results. For example, the ``\emph{cow}'' in Pascal VOC 2012, the ``\emph{guidepost}'' in Cityscapes, the ``\emph{desk}'' and the ``\emph{TV bench}'' in ADE20K, the ``\emph{bus}'', the ``\emph{tennis racket}'', and the ``\emph{guideboard}'' in COCO-Stuff 10K. The results obtained are basically the same as those of the large teacher model.
Besides, compared to large models with higher model complexity (\ie, DANet~\cite{fu2019dual} and HRNet~\cite{sun2019high}) on Pascal VOC 2012 and Cityscapes, although our method is not as competitive as theirs on quantitative results, our method achieves better predictions on object boundaries and small objects, which validate the effectiveness and emphasize the importance of {boundary distillation} and {context distillation}. 
With the help of our method, the student model is also able to predict better masks for certain fine-grained objects. For example, the ``\emph{cow's ear}'' and the ``\emph{person's leg}''.

\subsection{Comparisons With SOTA Methods on SSeg}
In this section, we explore the superiority accuracy and the effectiveness of the joint implementation of BCKD with the state-of-the-art (SOTA) KD methods on SSeg. To ensure a fair comparison, PSPNet-101~\cite{zhao2017pyramid} and DeepLabV3 Plus-101~\cite{chen2018encoder} are employed as the teacher models, while PSPNet-18$_{(1.0)}$ and DeepLabV3 Plus-18~\cite{chen2018encoder} serve as the student models. The specific settings for each student model are described in detail in the provided table. Some results are re-implemented by us on the released code due to inconsistencies in experimental settings and are marked with ``$\ddag$'' in the given tables. 
\begin{table}[t]
\centering
\renewcommand\arraystretch{1.2}
\setlength{\tabcolsep}{7pt}{
\caption{Comparisons on mIoU (\%), MFS and LHD with state-of-the-art methods on the \emph{test} set of COCO-Stuff 10K~\cite{caesar2018coco}.}
\label{rtab4}
\begin{tabular}{r|l|cc}
\hline \hline 
\multicolumn{1}{r}{$\mathbb{T}$: DeepLabV3 Plus-101} & 33.10\% & 1.00$\pm$0.00 & 0.00$\pm$0.00 \\
\multicolumn{1}{r}{$\mathbb{S}$: DeepLabV3 Plus-18} & 26.33\% & 0.45$\pm$0.08 & 4.21$\pm$0.35 \\
\hline 
Methods & mIoU (\%) & MFS ($\rho_l$) & LHD\\
\hline 
 + KD & 27.21$_{\color{red}{+0.88}}$ & 0.63$\pm$0.07 & 3.82$\pm$0.41 \\ 
 + SKD & 27.27$_{\color{red}{+0.94}}$ & 0.67$\pm$0.06 & 3.77$\pm$0.38 \\ 
 + SCKD & 27.38$_{\color{red}{+1.05}}$ & 0.72$\pm$0.08 & 3.63$\pm$0.42 \\ 
 + CIRKD & 27.68$_{\color{red}{+1.35}}$ & 0.81$\pm$0.05 & 3.45$\pm$0.33 \\ 
 + IFD & 27.91$_{\color{red}{+1.58}}$ & 0.88$\pm$0.04 & 3.28$\pm$0.29 \\
 + FGKD & 28.00$_{\color{red}{+1.67}}$ & 0.91$\pm$0.03 & 3.12$\pm$0.31 \\ 
 + CWT & 28.18$_{\color{red}{+1.85}}$ & 0.95$\pm$0.02 & 2.98$\pm$0.22 \\ 
 + IFVD & 28.35$_{\color{red}{+2.02}}$ & 1.02$\pm$0.03 & 2.77$\pm$0.25\\ 
 + C2VKD & 28.42$_{\color{red}{+2.09}}$ & 1.08$\pm$0.04 & 2.63$\pm$0.24\\
 + FAM & 28.48$_{\color{red}{+2.15}}$ & 1.12$\pm$0.03 & 2.55$\pm$0.21\\
 + CrossKD & 28.53$_{\color{red}{+2.20}}$ & 1.15$\pm$0.02 & 2.48$\pm$0.19\\
 + SSTKD & 28.70$_{\color{red}{+2.37}}$ & 1.23$\pm$0.03 & 2.31$\pm$0.17\\ 
\cellcolor[gray]{.95}\textbf{+ BCKD} & \cellcolor[gray]{.95}29.22$_{\color{red}{+2.89}}$ & \cellcolor[gray]{.95}1.41$\pm$0.02 & \cellcolor[gray]{.95}1.89$\pm$0.15\\ 
\hline 
 + TAT & 28.74$_{\color{red}{+2.41}}$ & 1.25$\pm$0.04 & 2.25$\pm$0.18 \\ 
\cellcolor[gray]{.95}\textbf{ + TAT + BCKD} & \cellcolor[gray]{.95}29.29$_{\color{red}{+3.11}}$ & \cellcolor[gray]{.95}1.44$\pm$0.01 & \cellcolor[gray]{.95}1.82$\pm$0.14\\ 
 + SlimSeg & 28.50$_{\color{red}{+2.17}}$ & 1.13$\pm$0.03 & 2.52$\pm$0.20 \\
\cellcolor[gray]{.95}\textbf{ + SlimSeg + BCKD} & \cellcolor[gray]{.95}29.45$_{\color{red}{+3.12}}$ & \cellcolor[gray]{.95}1.46$\pm$0.01 & \cellcolor[gray]{.95}1.79$\pm$0.13\\ 
 + BPKD & 28.66$_{\color{red}{+2.33}}$ & 1.21$\pm$0.02 & 2.36$\pm$0.16\\
\cellcolor[gray]{.95}\textbf{ + BPKD + BCKD} & \cellcolor[gray]{.95}29.60$_{\color{red}{+3.27}}$ & \cellcolor[gray]{.95}1.49$\pm$0.01 & \cellcolor[gray]{.95}1.72$\pm$0.12\\ 
\hline \hline 
\end{tabular}}
\vspace{-3mm}
\end{table}

\subsubsection{Superiority of BCKD} Compared to the SOTA KD methods on SSeg, on the top half of Table~\ref{rtab3} and Table~\ref{rtab4}, we can observe that our BCKD can surpass these methods. BCKD boosts the student model by $3.87\%$, $5.82\%$, and $3.80\%$ mIoU$\uparrow$ on the \emph{val} sets of Pascal VOC 2012~\cite{everingham2010pascal}, Cityscapes~\cite{cordts2016cityscapes}, and ADE20K~\cite{zhou2017scene}, respectively. Compared to the current SOTA KD methods on these datasets, BCKD outperforms IFVD~\cite{wang2020intra}, TAT~\cite{lin2022knowledge}, and SSTKD~\cite{ji2022structural} on Pascal VOC 2012 by 0.6\%, 0.63\%, and 0.74\% mIoU$\uparrow$, respectively. The visualized comparison results with the classic KD~\cite{hinton2015distilling} and SOTA IFVD methods are presented in the last row of Figure~\ref{fig4}. It can be observed that our method demonstrates significant advantages in capturing the connectivity of small objects as well as the integrity of the boundary masks.
Furthermore, BCKD achieves higher mIoU than SOTA methods on Cityscapes and ADE20K datasets as well. On the COCO-Stuff 10K~\cite{caesar2018coco} datasets in Table~\ref{rtab4}, our method surpasses the student model and the SOTA TAT model by 2.89\% and 0.48\% mIoU$\uparrow$, respectively. As demonstrated in~Table~\ref{rtab4}, the proposed BCKD framework also exhibits significant advantages in terms of MFS ($\rho_l$) and LHD. These results not only indicate an overall performance improvement but also validate the effectiveness of our novel boundary distillation and context distillation, which were designed to address these critical aspects of the learning process.
Since inference is only conducted on the student model, our method does not introduce any increase in model complexity. These results across different datasets can confirm that the task-specific knowledge is indeed more effective in practice compared to general knowledge.

\subsubsection{Effectiveness of the joint implementation} The experimental results on the joint implementation of BCKD and SOTA KD methods are presented on the lower half of Table~\ref{rtab3} and Table~\ref{rtab4}, respectively. 
It can be observed that, on top of BCKD, further adding IFVD~\cite{wang2020intra}, TAT~\cite{lin2022knowledge}, and SSTKD~\cite{ji2022structural} yields consistent performance gains, with mIoU$\uparrow$ improvements of $0.77\%$, $0.69\%$, and $0.61\%$ on the \emph{val} set of Pascal VOC 2012, respectively. This can be attributed to the fact that BCKD contains semantic boundary and context that is not present in these methods, further demonstrating the importance of semantic boundary and context information for the SSeg task. However, adding SSTKD~\cite{ji2022structural} on top of BCKD resulted in a performance decrease (\ie, 0.13\% mIoU$\downarrow$ on Pascal VOC 2012 and 0.02\% mIoU$\downarrow$ on Cityscapes) compared to the accuracy on BCKD. We guess that this may be because SSTKD uses superficial image texture information, which is non-semantic and contain some noise relative to the extracted semantic boundaries. On COCO-Stuff 10K, we can observe that our method further enhances the performance of all SOTA methods, including TAT~\cite{lin2022knowledge}, SlimSeg~\cite{xue2022slimseg}, and BPKD~\cite{liu2024bpkd}, and finally achieves 29.60\% mIoU on the \emph{test} set. 
\begin{figure*}[t]
\centering
\includegraphics[width=.99\textwidth]{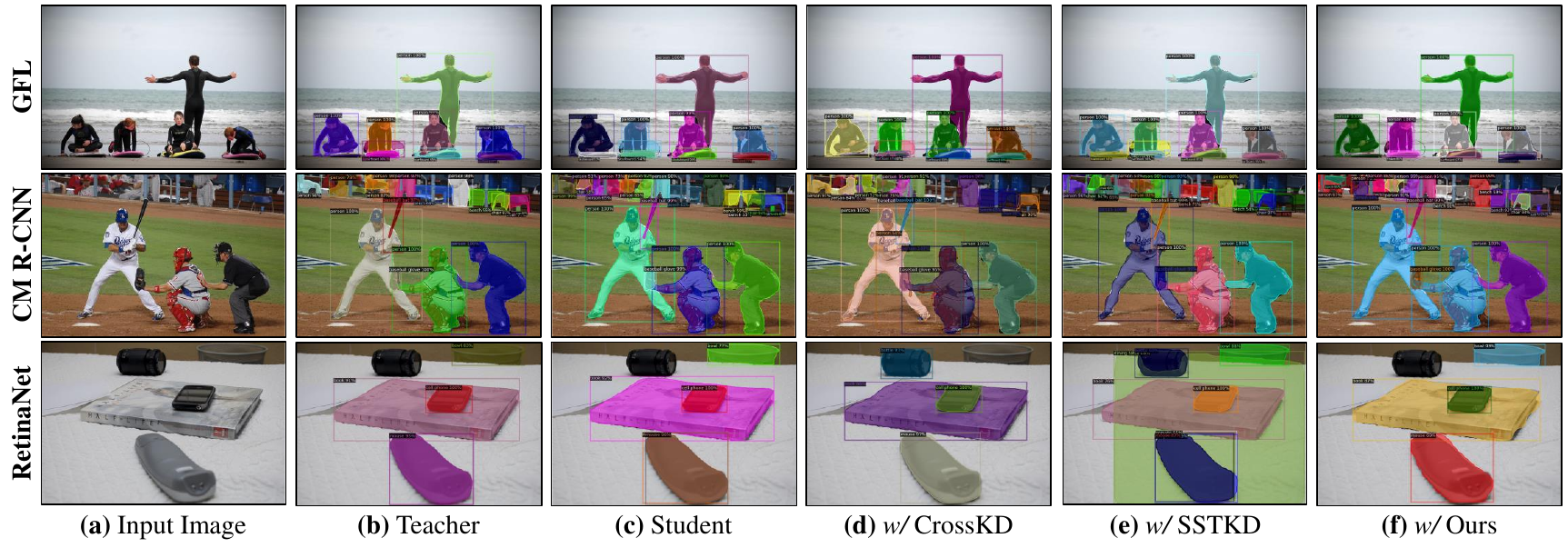}
\vspace{-3mm}
\caption{Visualization results on ISeg and ODet. ``\textsl{w/}'' means ``with'', indicating that the corresponding knowledge distillation method is deployed based on the student model. We chose the state-of-the-art methods CrossKD~\cite{wang2024crosskd} and SSTKD~\cite{ji2022structural} for comparison.}
\vspace{-4mm}
\label{figinsobd}
\end{figure*}

\begin{table}[t]
\centering
\renewcommand\arraystretch{1.2}
\setlength{\tabcolsep}{3pt}{
\caption{Result comparisons with the state-of-the-art methods on the \emph{val} set of MS-COCO 2017 \cite{lin2014microsoft} for ISeg and ODet. ``CM R-CNN'': Cascade Mask R-CNN. mAP$^\textrm{m}$ and mAP$^\textrm{b}$ denotes the average precision on instance segmentation mask and object detection bounding box, respectively.}
\label{rtable5}
\begin{tabular}{r|ccc|cc} 
\hline \hline 
Methods & AP$^\textrm{m}$ (\%) & AP$^\textrm{b}$ (\%) & FPS & MFS ($\rho_l$) & LHD \\ 
\hline 
$\mathbb{T}$: GFL-50~\cite{li2020generalized} & 36.8 & 40.2 & 19.4 & 1.00 & 0.00 \\
\hline 
$\mathbb{S}$: GFL-18~\cite{li2020generalized} & 33.1 & 35.8 & 23.7 & 0.82 & 3.20 \\ 
+ FGD~\cite{yang2022focal} & 34.0 & 36.6 & 23.7 & 0.85 & 2.90 \\ 
+ SKD~\cite{liu2019structured} & 34.3 & 36.9 & 23.7 & 0.86 & 2.70 \\ 
+ GID~\cite{dai2021general} & 34.6 & 37.8 & 23.7 & 0.89 & 2.50 \\ 
+ LD~\cite{zheng2022LD} & 34.8 & 38.0 & 23.7 & 0.90 & 2.30 \\ 
+ PKD~\cite{cao2022pkd} & 35.0 & 38.0 & 23.7 & 0.91 & 2.20 \\ 
+ CrossKD~\cite{wang2024crosskd} & 35.3 & 38.1 & 23.7 & 0.92 & 1.90 \\ 
+ SSTKD~\cite{ji2022structural} & 35.2 & 38.3 & 23.7 & 0.93 & 1.80 \\ 
\rowcolor[gray]{0.95} + BCKD$_{\textrm{ours}}$ & \textbf{35.8} & \textbf{38.8} & 23.7 & \textbf{0.95} & \textbf{1.60} \\ 
\hline 
$\mathbb{T}$: CM R-CNN-101~\cite{cai2018cascade} & 37.3 & 42.9 & 13.1 & 1.00 & 0.00 \\ 
\hline 
$\mathbb{S}$: CM R-CNN-50~\cite{cai2018cascade} & 36.5 & 41.9 & 16.1 & 0.88 & 2.10 \\ 
+ FGD~\cite{yang2022focal} & 35.3 & 42.1 & 16.1 & 0.87 & 2.20 \\ 
+ SKD~\cite{liu2019structured} & 36.5 & 42.2 & 16.1 & 0.89 & 2.00 \\ 
+ GID~\cite{dai2021general} & 36.7 & 42.0 & 16.1 & 0.90 & 1.90 \\ 
+ LD~\cite{zheng2022LD} & 36.8 & 42.1 & 16.1 & 0.91 & 1.80 \\ 
+ PKD~\cite{cao2022pkd} & 36.8 & 42.0 & 16.1 & 0.92 & 1.70 \\ 
+ CrossKD~\cite{wang2024crosskd} & 36.9 & 42.2 & 16.1 & 0.93 & 1.60 \\ 
+ SSTKD~\cite{ji2022structural} & 37.0 & 42.2 & 16.1 & 0.94 & 1.50 \\ 
\rowcolor[gray]{0.95} + BCKD$_{\textrm{ours}}$ & \textbf{37.0} & \textbf{42.5} & 16.1 & \textbf{0.96} & \textbf{1.40} \\ 
\hline 
$\mathbb{T}$: RetinaNet-101~\cite{ross2017focal} & 33.5 & 38.9 & 13.5 & 1.00 & 0.00 \\ 
\hline 
$\mathbb{S}$: RetinaNet-50~\cite{ross2017focal} & 31.7 & 37.4 & 17.7 & 0.85 & 2.40 \\ 
+ FGD~\cite{yang2022focal} & 32.1 & 37.7 & 17.7 & 0.86 & 2.20 \\ 
+ SKD~\cite{liu2019structured} & 32.5 & 37.5 & 17.7 & 0.87 & 2.12 \\ 
+ GID~\cite{dai2021general} & 32.8 & 37.6 & 17.7 & 0.88 & 2.03 \\ 
+ LD~\cite{zheng2022LD} & 33.1 & 37.8 & 17.7 & 0.89 & 1.90 \\ 
+ PKD~\cite{cao2022pkd} & 33.0 & 37.8 & 17.7 & 0.90 & 1.88 \\ 
+ CrossKD~\cite{wang2024crosskd} & 33.2 & 38.0 & 17.7 & 0.91 & 1.75 \\ 
+ SSTKD~\cite{ji2022structural} & 33.1 & 38.1 & 17.7 & 0.92 & 1.66 \\ 
\rowcolor[gray]{0.95} + BCKD$_{\textrm{ours}}$ & \textbf{33.3} & \textbf{38.5} & 17.7 & \textbf{0.94} & \textbf{1.54} \\ 
\hline \hline 
\end{tabular}}
\vspace{-5mm}
\end{table}

\subsection{Comparisons With SOTA Methods on ISeg and ODet}
The quantitative result comparisons on ISeg and ODet are presented in Table~\ref{rtable5}. The obtained results indicate that our method can consistently outperform existing methods across various baseline models, demonstrating its strong generalization and versatility. Specifically, we achieve AP scores of $35.8$\%/$38.8$\%, $37.0$\%/$42.5$\%, and $33.3$\%/$38.5$\% for instance segmentation masks (\ie, AP$^\textrm{m}$) and object detection bounding boxes (\ie, AP$^\textrm{b}$) on the GFL-18~\cite{li2020generalized}, Cascade Mask R-CNN-50~\cite{cai2018cascade}, and RetinaNet-50~\cite{ross2017focal}, respectively. In comparison with the SOTA CrossKD~\cite{wang2024crosskd} and SSTKD~\cite{ji2022structural}, our method demonstrates an average performance improvement of approximately $0.5$\%. This enhancement serves to validate the effectiveness of our proposed method. 
The results also demonstrate significant advantages in both MFS and LHD, indicating its superior capability in preserving the structure of learned feature manifolds while maintaining high precision in boundary-sensitive tasks. These results substantiate the effectiveness of our boundary and context distillation in maintaining geometric consistency and minimizing alignment errors.  

The visual comparison results with baseline methods and SOTA methods are shown in Figure~\ref{figinsobd}. It is observed that, relative to the baseline student models, the application of various KD strategies enhances the prediction results for specific classes (\eg, the \emph{person}'', the \emph{book}'', and the \emph{surfboard}''), thereby affirming the effectiveness of KD in dense image prediction tasks. Moreover, when compared to the SOTA methods CrossKD and SSTKD, our method demonstrates improved connectivity in object regions and boundary integrity (\eg, the \emph{person}'' and the \emph{baseball bat}''), highlighting the effectiveness of our proposed context distillation and boundary distillation strategies tailored for the targeted tasks. Additionally, our method addresses the issue of overlapping predicted bounding boxes (\eg, the \emph{mouse}'' and the ``\emph{chair}''), a benefit attributed to the enriched contextual information incorporated into the student model via context distillation. 

Furthermore, we also observed a significant phenomenon wherein our method effectively reduces the occurrence of hallucinations in the student model's predictions. Specifically, as depicted in the last column of Figure~\ref{figinsobd}, both the teacher and student models fail to identify the ``\emph{camera}'', while the CrossKD and SSTKD methods mistakenly classify the ``\emph{camera}'' as the ``\emph{bottle}''. In contrast, our approach accurately recognizes the ``\emph{camera}'' as a background object, aligning with the dataset's definitions. We hypothesize that this discrepancy may stem from the confusion of target knowledge caused by task-irrelevant KD during the training process. Our proposed task-specific BCKD is inherently designed to alleviate such confusion from the outset.
\section{Conclusion and Future Work}
In this work, we propose a customized boundary and context knowledge distillation (BCKD) method tailored for efficient dense image prediction tasks on AI accelerator, including semantic segmentation, instance segmentation, and object detection. Our approach significantly narrows the performance gap between compact, efficient models and their larger, more accurate counterparts while maintaining computational efficiency. Specifically, BCKD enhances boundary-region completeness and ensures object-region connectivity, leading to consistent accuracy improvements across diverse challenging benchmarks and architectures. Theoretical analysis further corroborates the effectiveness  of our method. 

As a generalizable method, in the future, we plan to extend BCKD to additional dense visual tasks (\eg, pose estimation and image synthesis) and investigate its adaptation to emerging architectures (\eg, Vision Transformer and Vision Mamba) to better support model compression for AI accelerator deployment. Moreover, we will explore synergies between BCKD and large foundation models (\eg, Segment Anything Model and vision-language models) to further enhance the robustness of lightweight dense predictors under adverse conditions.

\bibliographystyle{IEEEtran}
\bibliography{main}
\end{document}